\newcommand{\ope}{\text{OPE}}
\newcommand{\tdr}{{\tilde r}}
\newcommand{\Int}{\text{Int}}
\newcommand{\clip}{\text{clip}}
\title{Beyond-Expert Performance with Limited Demonstrations: Efficient Imitation Learning with Double Exploration}
\author{%
  Heyang Zhao$^1$$^*$,
  ~~Xingrui Yu$^2$$^3$$^*$,
  ~~David M. Bossens$^2$$^3$$^*$,
  ~~Ivor W. Tsang$^2$$^3$$^\dagger$,
  ~~Quanquan Gu$^1$$^\dagger$ \\
  $^1$Department of Computer Science, University of California, Los Angeles \\
  $^2$IHPC, Agency for Science, Technology and Research, Singapore \\
  $^3$CFAR, Agency for Science, Technology and Research, Singapore \\
  \texttt{\{hyzhao,qgu\}@cs.ucla.edu} \\
  \texttt{\{yu\_xingrui,david\_bossens,ivor\_tsang\}@cfar.a-star.edu.sg}
}
\begin{document}

\maketitle
\def\customfootnotetext#1#2{{%
  \let\thefootnote\relax
  \footnotetext[#1]{#2}}}

\customfootnotetext{1}{\textsuperscript{*}Equal technical contribution. \textsuperscript{$\dagger$}Corresponding author.}

\begin{abstract} 
    Imitation learning is a central problem in reinforcement learning where the goal is to learn a policy that mimics the expert's behavior. In practice, it is often challenging to learn the expert policy from a limited number of demonstrations accurately due to the complexity of the state space. Moreover, it is essential to explore the environment and collect data to achieve beyond-expert performance. To overcome these challenges, we propose a novel imitation learning algorithm called Imitation Learning with Double Exploration (ILDE), which implements exploration in two aspects: (1) optimistic policy optimization via an exploration bonus that rewards state-action pairs with high uncertainty to potentially improve the convergence to the expert policy, and (2) curiosity-driven exploration of the states that deviate from the demonstration trajectories to potentially yield beyond-expert performance. Empirically, we demonstrate that ILDE outperforms the state-of-the-art imitation learning algorithms in terms of sample efficiency and achieves beyond-expert performance on Atari and MuJoCo tasks with fewer demonstrations than in previous work. We also provide a theoretical justification of ILDE as an uncertainty-regularized policy optimization method with optimistic exploration, leading to a regret growing sublinearly in the number of episodes.
\end{abstract}

\section{Introduction}

Imitation learning (IL) is an important subfield of reinforcement learning (RL), in which ground truth rewards are not available and the goal is to learn a policy based on expert demonstrations. Often such demonstrations are limited, making it challenging to achieve expert-level performance. In practice, imitation learning is widely used in various applications such as autonomous vehicle navigation \citep{codevilla2018end}, robotic control \citep{finn2016guided, zhang2018deep} and surgical assistance \citep{osa2014online,Li2022}. 

As one of the simplest approaches of imitation learning, behavior cloning \citep{pomerleau1988alvinn} learns a policy directly from the state-action pairs in the demonstration dataset. More recent approaches to imitation learning, such as generative adversarial imitation learning (GAIL) \citep{Ho2016}, use a discriminator to guide policy learning rather than directly learning a reward function based on demonstration trajectories. Due to matching occupancy based on a limited set of trajectories, it is still difficult for such algorithms to achieve better-than-expert performance and solve high-dimensional problems with limited demonstration data. Moreover, these techniques are typically unstable since the reward function is continually changing as the reward function is updated during the RL training. Alternative approaches, based on the curiosity \citep{yu2020intrinsic,pathak2017curiosity,burda2018large}, a form of self-supervised exploration, encourage the agent to explore transitions that are distinct from the expert -- thereby potentially yielding beyond-expert performance. Another benefit of the approach is related to the GIRIL algorithm specifically; that is, it uses a pre-trained auto-encoder, which makes the technique more stable \citep{yu2020intrinsic}.

As illustrated in Table~\ref{tab:motivation_table} in Appendix~\ref{app:motivation_table}, this work seeks to formulate a sample-efficient imitation learning algorithm that combines the best of both worlds. That is, we integrate the stability and self-supervised exploration properties of GIRIL into discriminator-based imitation learning approaches. Additionally improving the sample-efficiency with an exploration-bonus, we formulate the framework of Imitation Learning with Double Exploration (ILDE).

In detail, ILDE learns a policy based on an uncertainty-regularized discrepancy. The uncertainty-regularized discrepancy combines the distance to the expert policy, the cumulative uncertainty of the policy during exploration, and the cumulative uncertainty of the policy with respect to the demonstration dataset. To optimize the uncertainty-regularized discrepancy, ILDE utilizes a GAIL-based imitation reward while integrating two distinct forms of exploration. Firstly, through optimistic policy optimization, augmented by an exploration bonus, ILDE incentivizes the exploration of state-action pairs characterized by high uncertainty, thereby facilitating the training of the discriminator network, which functions as the reward model. Secondly, leveraging curiosity-driven exploration, ILDE targets transitions deviating from demonstration trajectories, paving the way for nuanced policy improvements. 

The key contributions of this paper are summarized as follows: 
\begin{itemize}[leftmargin=*]
    \item In Section \ref{sect:preliminaries}, we theoretically formulate the problem of solving the aforementioned policy which minimizes the uncertainty-regularized discrepancy between the learner trajectories and demonstration trajectories. 
    \item In Section \ref{sect:theory}, we present our proposed algorithm, ILDE with natural policy gradient (ILDE-NPG). We are then able to demonstrate the regret guarantee of ILDE-NPG in Theorem~\ref{thm:regret}, which is the first theoretical guarantee for imitation learning in MDPs with nonlinear function approximation. 
    \item Although ILDE-NPG is computationally expensive, we offer a more flexible version for the practical implementation of ILDE in Section \ref{sect:ILDE}, focusing on two key exploration strategies: (1) encouraging the learner to explore state-action pairs that are `far' from the current rollout dataset, and (2) rewarding benign deviations from the demonstration dataset using curiosity-based intrinsic rewards.
    \item In Section \ref{sect:exp}, we present experimental results for a practical implementation of ILDE. Our findings demonstrate that ILDE surpasses existing baselines in terms of both average return and sample complexity. This underscores the practical advantages of our proposed objective for imitation learning.
\end{itemize}


\section{Related Work} \label{sect:related}
\textbf{Imitation learning with limited demonstrations for beyond-expert performance.} 
Learning from limited demonstrations in a high-dimensional environment is challenging, and the effectiveness of imitation learning methods is hampered \citep{li2024imitation}. Inverse reinforcement learning (IRL) methods \citep{ziebart2008maximum,ziebart2010modeling,boularias2011relative} seek a reward function that best explains the demonstration data, which makes it hard for them to achieve better-than-expert performance when the data are extremely limited. GAIL achieves impressive performance in low-dimensional environments via adversarial learning-based distribution matching. Variational adversarial imitation learning (VAIL) improves GAIL by compressing the information flow with a variational discriminator bottleneck \citep{Peng2019}. Recent adversarial imitation learning methods include IQ-Learn \citep{garg2021iq}, which directly infers the Q-value (bypassing the reward function), and HyPE \citep{ren2024hybrid}, which trains the policy on a mixture of expert and on-policy data. 

Unfortunately, GAIL and VAIL do not scale well in high-dimensional environments \citep{brown2019extrapolating}, and still require many episodes of demonstrations. Recent work has focused on reducing the number of required demonstration episodes to between 5 and 20 episodes using techniques such as MCTS-based RL \citep{yin2022planning}, patch rewards \citep{liu2023visual}, and applying demonstration augmentation and a prior policy baseline  \cite{Li2022}. Techniques based on curiosity-driven exploration, such as CDIL \citep{pathak2017curiosity} and GIRIL \citep{yu2020intrinsic}, potentially provide a more scalable approach to imitation learning. In particular, \cite{yu2020intrinsic} propose the GIRIL algorithm within a setting with only a so-called one-life demonstration, which is only a partial episode of an Atari game. The system demonstrates favorable performance on Atari games when compared to GAIL, VAIL, and CDIL. Such techniques with intrinsic motivation have the advantage of allowing beyond-expert performance but the disadvantage of potentially optimizing an objective that is not implied by the demonstrations. Our proposed ILDE system seeks to use curiosity-driven exploration as one of two sources of exploration to supplement a traditional imitation learning objective. This is based on the fact that the curiosity reward has high empirical success and enables agents to learn non-trivial skills by exploring hard-to-visit states \citep{rajaraman2020toward}.

Beyond reducing the sheer amount of demonstration data, the problem of varying optimality scores and the reliability of demonstrations has also garnered more attention recently. Confidence-Aware Imitation Learning (CAIL) extends adversarial inverse reinforcement learning with a bi-level optimization technique to concurrently learn the policy along with confidence scores reflecting the optimality of the demonstrations \citep{Zhang2021c}. The setting of imbalanced demonstrations has been studied from a semi-supervised learning perspective \citep{Fu2023} as well as within multi-modal imitation learning \citep{Gu2024}. We focus on an alternative approach to the problem of demonstration reliability which combines curiosity with a traditional imitation reward.

\textbf{Reinforcement learning from demonstrations (RLfD).} An alternative framework, related to imitation learning, is reinforcement learning from demonstrations (RLfD) \citep{hester2018deep,christiano2017deep,zhu2022self}. While potentially offering accelerated policy learning, such techniques require ground-truth rewards, which are not available in pure imitation learning settings.

\textbf{Exploration bonuses and optimistic RL.}
Optimism is a long-standing principle in RL, where it typically refers to overestimating the value of a particular state or state-action pair in order to make the state or state-action pair known more rapidly through increased visitations -- essentially, this implies that unknown state-action pairs are given a large exploration bonus while known state-action pairs are given little to no exploration bonus. Analogous to upper confidence bound (UCB) bandit algorithms \citep{Auer2002}, various techniques have been implemented in RL which use confidence bounds to upper bound the value in a probabilistic sense. Such techniques traditionally use concentration inequalities based on visitation counts for each state-action pair, which gives such techniques a firm statistical grounding but which limits them to discrete state-action spaces \citep{shani2020optimistic,Fruit2018,Jaksch2010}. 

Recent techniques have aimed at making exploration bonuses more scalable. E3B provides an approximate approach to visitation counts by maintaining a covariance matrix which implicitly captures the relations between visited states \citep{henaff2023exploration}. State Entropy directly aims to maximize the entropy over the states, rewarding states in a batch according to the log of their distance to its $k$-nearest neighbor state \citep{seo2021state}. 

\textbf{Theoretical guarantees for imitation learning.} There is a large body of literature on imitation learning for tabular MDPs \citep{cai2019global, chen2019computation, zhang2020generative, xu2020error, shani2022online, chang2021mitigating}. Subsequently, \cite{liu2021provably} studied imitation learning for linear kernel MDPs and provided an $\tilde{O}(\sqrt{H^4d^3T})$ regret bound, where $d$ is the dimension of the feature space, $H$ is the horizon of the MDPs, $T$ is the number of episodes. \cite{viano2022proximal} considered linear MDPs and proposed PPIL, which achieved a sample complexity of $O(d^2/(1 - \gamma)^9 \epsilon^5)$ in discounted linear MDPs. 
Our work considers MDPs with general function approximation, utilizing the generalized Eluder dimension \citep{agarwal2023vo, zhao2023nearly} to characterize the complexity of the function class. 
\section{Preliminaries} \label{sect:preliminaries}

We consider an episodic MDP $(\cS, \cA, H, \PP, \rho, r)$, where $\cS$ and $\cA$ are the state and action spaces, respectively, $H$ is the length of each episode, $\PP_h$ is the Markov transition kernel of the $h$-th step of each episode for any $h \in [H]$, $\rho$ is the initial state distribution, and $r : S \times A \rightarrow [-1, 1]$ is the reward function. We assume without loss of generality that the reward function $r$ is deterministic.

While performing imitation learning in episodic MDPs, the agent interacts with the environment as follows. At the beginning of each episode $t \in [T]$, the agent chooses a policy $\pi^t = \{\pi_h^t\}_{h \in [H]} \in \Delta(\cA | \cS, H)$. Then the agent takes an action $a_h^t \sim \pi_h^t(\cdot | s_h^t)$ at the $h$-th step of the $t$-th episode and transits to the next state $s_{h+1}^t \sim \PP_h(\cdot | s_h^t, a_h^t)$. The agent does not receive the true reward $r^*( s_h^t,a_h^t)$ but instead it receives a surrogate reward $r( s_h^t,a_h^t)$. The episode terminates when the agent reaches the state $s_{H+1}^t$. Without loss of generality, we assume that the initial state $s_1 = x$ is fixed across different episodes. We remark that our algorithms and corresponding analyses readily generalize to the setting where the initial state $s_1$ is sampled from a fixed distribution.

We now define the value function in episodic MDPs. For any policy $\pi = \{\pi_h\}_{h \in [H]}$ and reward function $r : \cS \times \cA \rightarrow [-1, 1]$, the state value function $V$ and action-value function $Q$ are defined for any $(s, a, h) \in \cS \times \cA \times [H]$ as follows,
\begin{align}
    V^r_{h, \pi}(s) = \mathbb{E}_{\pi} \Big[ \sum_{i=h}^{H} r(s_i, a_i) \mid s_h = s\Big],\ \  
    Q^r_{h, \pi}(s, a) = \mathbb{E}_{\pi} \Big[ \sum_{i=h}^{H} r(s_i, a_i) \mid s_h = s, a_h = a\Big], \label{eq:def:value}
\end{align}
where the expectation $\mathbb{E}_{\pi}[\cdot]$ is taken with respect to the action $a_i \sim \pi_i(\cdot | s_i)$ and the state $s_{i+1} \sim \PP_i(\cdot | s_i, a_i)$ for any $i \in \{h, h+1, \ldots, H\}$. With slight abuse of notation, we also denote by $\PP_h$ the operator form of the transition kernel such that $(\PP_h f)(s, a) = \mathbb{E}_{s' \sim \PP_h(\cdot | s,a)}[f(s')]$ for any $f : S \rightarrow \mathbb{R}$. By the definitions of the value functions in \eqref{eq:def:value}, for any $(s, a, h) \in S \times A \times [H]$, any policy $\pi$, and any reward function $r$, we have
\begin{equation}
    V^r_{h, \pi}(s) = \langle Q^r_{h, \pi}(s, \cdot), \pi_h(\cdot, s) \rangle_A, \quad Q^r_{h, \pi}(s, a) = r_h(s, a) + \PP_h V^r_{h + 1, \pi}(s, a), \quad V^r_{H + 1, \pi}(s) = 0, \label{2.2}
\end{equation}
where $\langle \cdot, \cdot \rangle_A$ denotes the inner product over the action space $A$. We further define the expected cumulative reward as the value from the initial distribution $\rho$, i.e.
\[ J(\pi, r) = \mathbb{E}_{s \sim \rho}[V^r_{1, \pi}(s)]. \]

We assume that there is an unknown expert policy $\pi^E = \{\pi^E_h\}_{h \in [H]} \in \Delta(\cA | \cS, H)$ that achieves a high, but potentially suboptimal, expected cumulative reward $J(\pi^E, r^*)$ under the unknown underlying reward function $r^*$. Given $n$ demonstration trajectories $\tau^E = \{(s_i^{(j)}, a_i^{(j)})\}_{i=1}^{H}$ for $j \in [n]$, the goal of imitation learning is to learn a policy $\pi$ that achieves a potentially high expected cumulative reward $J(\pi, r^*)$ under the unknown reward function $r^*$ based on the expert demonstration. As introduced in \citet{chen2019computation}, we characterize the discrepancy between the expert policy $\pi^E$ and the learner policy $\pi$ by the following Integral Probability Metric (IPM) over the stationary distributions of the MDP. 

\begin{definition}[Definition 2, \citealt{chen2019computation}]
    Let $\cR$ denote a class of symmetric reward functions $r : \cS \times \cA \rightarrow [-1, 1]$, i.e., if $r \in \cR$, then $-r \in \cR$. Given two policies $\pi, \pi' \in \Delta(\cA | \cS, H)$, the $\cR$-distance is defined as \begin{align*} 
        d_\cR(\pi, \pi') = \sup_{r \in \cR} \big| J(\pi^E, r) - J(\pi, r) \big|.
    \end{align*}
\end{definition}

\begin{remark} 
    The IPM distance is a versatile tool for evaluating GAN models. In the context of imitation learning, we can choose different classes of reward functions $\cR$ to measure the discrepancy between the expert policy and the learner policy. For instance, we can choose $\cR$ to be the class of symmetric reward functions that are 1-Lipschitz continuous with respect to the state-action pair $(s, a)$, which corresponds to the Wasserstein distance. Or we can choose $\cR$ to be the unit ball in an RKHS, which yields kernel maximum mean discrepancy (MMD). 
\end{remark}

As proposed in \citet{yu2020intrinsic}, to encourage the agent to explore the environment and learn a beyond-expert policy, it is essential to incorporate \emph{uncertainty-driven} (intrinsic-reward-driven) exploration into the imitation learning framework. In contrast to GIRIL \citep{yu2020intrinsic} which purely relies on the intrinsic reward to explore transitions that are  distinct from the expert demonstration, we consider learning a policy which mimics the expert policy subject to an intrinsic reward regularisation term. To this end, we defined the following objective function:
\begin{align} 
    \min_{\pi \in \Delta(\mathcal{S}|\cA,H)} \max_{r \in \cR} \left( J(\pi^E, r) - J(\pi, r)  - \lambda \cdot \Int(\pi ; \tau^E)\right), \label{eq:def:objective}
\end{align}
where $\Int(\pi ; \tau^E)$ is the expected cumulative intrinsic reward of the policy $\pi$ under the expert demonstration $\tau^E$. More specifically, \begin{align} 
\label{eq: intrinsic reward}
    \Int(\pi; \tau^E) := \EE_{\pi, \hat s_{h + 1} \sim \hat\PP(\cdot | s_h, a_h)}\bigg[\sum_{h = 1}^H \cL(\hat s_{h + 1}, s_{h + 1}) \bigg],
\end{align}
where $\hat{\PP}$ is the estimated transition probability that is estimated as follows: 
\begin{align*} 
    \hat\PP := \argmin_{\PP \in \cP^{\rm model}} \sum_{j = 1}^n \sum_{i = 1}^H \EE_{\hat s_{i + 1} \sim \PP(\cdot | s_i^{(j)}, a_i^{(j)})} \bigg[\cL(\hat s_{i + 1}, s_{i + 1}^{(j)})\bigg]
\end{align*}
is a trained transition model over the demonstration trajectories $\tau^E$, and $\cL$ is a distance metric (e.g. $\cL(\hat s_{i + 1}, s_{i + 1}^{(j)}) = \vert\vert \hat s_{i + 1} - s_{i + 1}^{(j)} \vert\vert_2^2$). (Empirically, it is the trained VAE model which samples the next state as proposed in \cite{yu2020intrinsic} and \cite{pathak2017curiosity}.)
For simplicity, we denote the intrinsic reward as the following form \begin{align} 
    \cL_{\tau^E, h}(s, a) = \EE_{\hat s' \sim \hat\PP_h(\cdot | s, a), s' \sim \PP_h(\cdot | s, a)}\big[\cL(\hat s', s')\big]. \label{eq:intrinsic}
\end{align}

As a shorthand, we define the \textbf{uncertainty-regularized loss function} for a policy $\pi$ and a reward function $r$ based on the intrinsic reward in Eq.~\ref{eq: intrinsic reward}, 
\begin{align} 
    \ell(\pi, r) = J(\pi^E, r) - J(\pi, r) - \lambda \cdot \Int(\pi ; \tau^E). \label{eq:def:loss}
\end{align}

\textbf{Imitation Learning.}
As shown in \eqref{eq:def:objective} and \eqref{eq:def:loss}, our goal is to learn an optimal policy $\pi^*$ which minimizes the following worst-case loss: \begin{align}
    \pi^* = \argmin_{\pi \in \Delta(\cA | \cS, H)} \max_{r \in \cR} \ell(\pi, r). \label{eq:def:optimal_policy}
\end{align}

\textbf{Conjecture.} To achieve the optimal policy as defined in \eqref{eq:def:optimal_policy}, we hypothesize that a more principled exploration technique is needed to adapt to the dynamic rewards during the training process. Additionally, we conjecture that such uncertainty regularization will help the learner attain performance beyond the expert level.

In the previous theoretical formulation of online imitation learning \citep{chen2019computation, liu2021provably}, the objective is to learn a policy which is close to the expert policy with respect to the $\cR$-distance. Thus, the optimal policy under their formulation is $\pi^E$ and the loss at the saddle point is zero. By contrast, our optimal policy $\pi^*$ is non-trivially different from the expert policy $\pi^E$.

In this work, we aim to find $\pi^*$ by proposing a novel theory-guided policy optimization algorithm with uncertainty-aware exploration. 

Within the online learning setting, we define the uncertainty regularized regret as follows:
\[
\text{Regret}(T) = \max_{r \in R} \sum_{t=1}^{T} \ell(\pi^t, r) - \ell(\pi^*, r). 
\]
To achieve the objective \eqref{eq:def:objective} in the online learning regime, we aim to design an imitation learning algorithm to achieve a $\text{Regret}(T)$ growing sublinear as a function of $T$. Then we can show that $\EE[\ell(\pi_{\rm out}, r)]$ converges to $\ell(\pi^*, r)$ as $T \to \infty$ for any $r \in \mathbb{R}$, where $\pi_{\rm out}$ refers to the hybrid policy sampled uniformly from $\{\pi^t\}_{t \in [T]}$.

\section{Imitation Learning with Double Exploration (ILDE)} \label{sect:theory}

We now turn to introducing the ILDE framework theoretically before its implementation in Section~\ref{sect:ILDE}. We analyze ILDE using mirror descent as the policy optimization technique which is more amenable to theoretical analysis than scalable techniques such as PPO. The theoretical analysis considers a general data collection subroutine of which on-policy data collection used in the practical implementation is a special case. The theoretical analysis further considers an IPM-based GAN and a 
bonus based on state-action uncertainty as proposed by \citep{agarwal2023vo}. In practice, we apply GAIL and state entropy \citep{seo2021state}. Since PPO already applies action entropy in the policy optimization, together the state entropy and action entropy help to explore unvisited state action pairs in a similar manner.

\subsection{Algorithm}

\begin{algorithm}[h]
    \caption{ILDE with Natural Policy Gradient}
    \label{alg:oppo}
    \begin{algorithmic}[1]
    \STATE \textbf{input}:
    number of iterations $K$, period of collecting fresh data  $m$, batch size $N$, learning rate $\eta$, demonstration trajectories $\tau^E$, reward function $r^1$, loss function $\cL_{\tau^E}$.
    \STATE \textbf{initialize:} for all $(h,s)\in[H]\times\cS$ set $\pi^1_h(\cdot\mid s)={\rm Uniform}(\mathcal{A})$
    \FOR{$k=1,\ldots,K$}
    \IF{$k \mbox{ mod } m = 1$} \label{line:sample-0}
    \STATE
    $\cD^k\gets$ $\{N$ fresh trajectories $\stackrel{\text{i.i.d.}}{\sim}\pi^{k'}\}$, where $k'$ is chosen uniformly at random from $\{\max(1, k - m + 1), \ldots, k - 1, k\}$.  
    \ELSE
    \STATE $\cD^k  \gets \cD^{k-1}$. 
    \STATE $r^k \gets r^{k-1}$.
    \ENDIF \label{line:sample-1}
    \STATE Update $r^k$ by projected gradient descent with estimated $-L(\pi^k, r^k)$. \label{alg2:line:6}
    \STATE
    $\{Q^k_h\}_{h\in[H]} \leftarrow\ope(\pi^k, \cD^k, r^k, \cL_{\tau^E})$. 
    \label{line:pe}
    \STATE for all $(h,s)\in[H]\times\cS$ update  $\pi^{k+1}_h(\cdot\mid s) \propto \pi^{k}_h(\cdot\mid s) \cdot \exp(    \eta \cdot Q_h^k(s,\cdot))$
    \label{line:pu}
    \ENDFOR
    \STATE \textbf{output}: $\pi_{out}$ that is sampled uniformly at random from $\{\pi^k\}_{k\in[K]}$. 
    \end{algorithmic}
\end{algorithm}

\begin{algorithm}[h]
    \caption{\ope$(\pi, \cD, r, \cL_{\tau^E})$}
    \label{alg:tabpe}
    \begin{algorithmic}[1]
    \STATE Split $\cD$ evenly into $H$ disjoint sets $\cD_h$ for $h \in [H]$.
    \STATE Set $V_{H+1}(s)\gets 0$ for all $s\in\cS$
    \FOR{$h=H, \cdots, 1$} \label{alg3:line:3}
    \STATE Least-squares regression: $\hat f_{h} \gets \argmin_{f_h \in \cF_h} \sum_{(s_h, a_h, s_{h+1}) \in \cD_h} (f_h(s_h, a_h) - V_{h + 1}(s_{h + 1}))^2$.
    \FOR{ $(s,a)\in\cS\times\cA$}
    \STATE Compute exploration bonus $b_h(s,a)$ as in \eqref{eq:def:bonus}. 
    \STATE $Q_h(s,a) \gets \clip_{[-H, H]} \big(\hat f_h(s, a)+\tilde r_h(s,a)+ b_h(s,a)\big)$,\\ where $\tdr_h(s, a) \gets r(s, a) + \lambda \cL_{\tau^E, h}(s, a)$.  \label{eq:def:tdr}
    \STATE $V_h(s) \gets \EE_{a\sim \pi_h(\cdot\mid s)}\left[ Q_h(s,a)\right]$. 
    \ENDFOR
    \ENDFOR \label{alg3:line:8}
    \STATE \textbf{output}  $\{Q_h\}_{h=1}^H$
    \end{algorithmic}
\end{algorithm}

This subsection instantiates a theoretical version of ILDE, where we apply Optimistic Natural Policy Gradient \citep{liu2024optimistic} as a policy optimization module. Since Algorithm \ref{alg:oppo} is a phasic policy optimization algorithm, the number of episodes where the learner interacts with the environment is $T = K \cdot N / m$. 

\textbf{Imitation reward Module.} In Line \ref{line:sample-0} of Algorithm \ref{alg:oppo}, the learner samples a data set with $N$ trajectories. With a carefully chosen $N$, the expected uncertainty of each state-action pair $(s, a) \in \cS \times \cA$ is upper bounded by $\tilde{O}(1 / \sqrt{N})$, where we omit the dependency of $H$ and generalized Eluder dimension. 

After updating the on-policy dataset, in Line \ref{alg2:line:6}, the agent can optimize the reward function according to the following loss function:
\begin{align} 
    L(\pi, r) = J(\pi^E, r) - J(\pi, r). \label{eq:def:reward_loss} 
\end{align}
While $L$ is unknown for the learner, in Line \ref{alg2:line:6} of Algorithm \ref{alg:oppo}, we compute an empirical estimate of $L$: \begin{align} 
    \hat L(\pi^k, r) = \frac{1}{n} \cdot \sum_{j = 1}^n \sum_{h = 1}^H r(s_h^{(j)}, a_h^{(j)}) - \frac{1}{N} \cdot \left[\prod_{h=1}^{H} \frac{\pi_h^k(a_h\mid s_h)}{\pi_h^{t_k}(a_h\mid s_h)}\right]\cdot \sum_{\tau \in \cD^k} \sum_{h = 1}^H  r(s_h^{(\tau)}, a_h^{(\tau)}), 
\end{align}
where $t_k$ is the index of the last policy which is used to collect fresh data at iteration $k$. 

\begin{assumption}[Convexity and Lipschitz Continuity of the reward function] \label{assumption:convex}
    We assume that the reward function $r$ is parameterized by a set of parameters $\btheta \in \Theta \subset \RR^d$ and the estimated loss function $\hat L$ is convex with respect to $\btheta$. For any $\btheta_1, \btheta_2 \in \Theta$, $\|\btheta_1 - \btheta_2\|_2 = O(1)$, and for any $\btheta \in \Theta$, $s, a \in \cS \times \cA$, $\|\nabla_{\btheta} r_{\btheta}(s, a)\|_2 = O(1)$. 
\end{assumption}

As a result, in Line \ref{alg2:line:6}, we essentially update the reward function $r^k$ by the following projected gradient descent: \begin{align} 
    \btheta^{k + 1} = \text{proj}_{\Theta}\big(\btheta^k - \eta_{\btheta} \cdot \nabla_{\btheta} [-\hat L(\pi^k, r^k)]\big).  \label{eq:reward_update}
\end{align}

To ensure that $\hat L$ can potentially approximate $L$ accurately, we also require the following assumption on the quality of the demonstration trajectories. 
\begin{assumption}[Quality of the demonstration trajectories] \label{assumption:demonstration}
    The demonstration data $\tau^E$ satisfies the following property for any reward function $r \in \cR$: \begin{align*} 
        \bigg|\frac{1}{n}\sum_{j = 1}^n \sum_{h = 1}^H r(s_h^{(j)}, a_h^{(j)}) - J(\pi^E, r) \bigg| \le \epsilon_E
    \end{align*}
    for some $\epsilon_E > 0$.
\end{assumption}
Note that the assumption is weak since the error can be bounded by Azuma-Hoeffding inequality.

\textbf{Policy optimization module.}
The algorithm is a phasic policy optimization algorithm built on \cite{liu2024optimistic}. In Line \ref{line:pu}, the policy $\pi^{k + 1}_h$ is obtained by taking a mirror descent step from $\pi^k_h$, maximizing the following KL-regularized return: 
\begin{align*}
    \pi^{k + 1}_h = \arg\max_{\pi_h} \eta \langle Q^k_h(s, \cdot), \pi_h(\cdot \vert s) \rangle - \lambda_{\text{ED}} d_{\text{KL}}(\pi_h | \pi_h^k),
\end{align*}
where $\lambda_{\text{ED}} \ge 0$ is a regularization parameter, $d_{\text{KL}}$ is the Kullbach-Leibler divergence, and $Q^k_h$ is computed by an optimistic policy evaluation (OPE) subroutine to guide the direction of policy updates. 

\textbf{Double Exploration in Algorithm \ref{alg:tabpe}.} 
While algorithms widely used in empirical studies, such as PPO, directly use empirical returns to guide the policy updating procedure, we employ Algorithm \ref{alg:tabpe} to estimate the expected return from all state-action pairs. 

In \ope \, (Algorithm \ref{alg:tabpe}), we set an exploration bonus according to Lemma \ref{lemma:confidence}, \begin{align} b_h(s, a) = \sqrt{8 H^2 \log(H \cdot \cN_{\cF}(\epsilon_\cF) / \delta) + 4 \epsilon_\cF N + \gamma} \cdot D_{\cF_h}((s, a); \cD_h), \label{eq:def:bonus}
\end{align} where $D_{\cF_h}$ is defined in Definition \ref{def:ged} as an exploration bonus to encourage the policy to explore the state-action pairs whose expected return values are hard to predict given the on-policy dataset $\cD^k$. 

Additionally, we utilize a pretrained uncertainty evaluation function $\cL_{\tau^E}$ to explore states beyond the demonstration trajectories. Then, in lines \ref{alg3:line:3}-\ref{alg3:line:8}, we use value iteration to compute the value function considering the sum of the imitation reward $r^k$, the exploration bonus $b_h^k$, and the uncertainty $\cL$.
\subsection{Theoretical Analysis}

With the ILDE framework introduced in Section \ref{sect:theory}, we provide a thorough regret analysis for Algorithm \ref{alg:oppo}. 

Our analysis addresses a broad category of MDPs, specifically those whose value functions can be effectively approximated and generalized across various state-action pairs. These are referred to as MDPs with bounded generalized Eluder dimension \citep{agarwal2023vo}.

\begin{definition}[Generalized Eluder dimension, \citealt{agarwal2023vo}] \label{def:ged}
    Let $\lambda_{\text{ED}} \ge 0$, and let $\Zb = \{z_i\}_{i \in [T]} \in (\cS \times \cA)^{\otimes T}$ be a sequence of state-action pairs . Then the generalized Eluder dimension of a value function class $\cF: \cS \times \cA \to [-H, H]$ with respect to $\lambda_{\text{ED}}$ is defined by $\dim_{T}(\cF) := \sup_{\Zb:|\Zb| = T} \dim(\cF, \Zb)$, \begin{align*} 
         &\dim(\cF, \Zb):= \sum_{i=1}^T \min \big(1,  D_{\cF}^2(z_i; z_{[i - 1]})\big), \\
        \text{where}\ &D_\cF^2(z; z_{[t - 1]}) := \sup_{f_1, f_2 \in \cF} \frac{(f_1(z) - f_2(z))^2}{\sum_{s \in [t - 1]}(f_1(z_s) - f_2(z_s))^2 + \lambda_{\text{ED}}}.
    \end{align*} 
    We write $\dim_{T}(\cF) := H^{-1} \cdot \sum_{h \in [H]}\dim_{T}(\cF_h)$ for short when $\cF$ is a collection of function classes $\cF = \{\cF_h\}_{h = 1}^H$ in the context.
\end{definition}

\begin{remark}
    The concept of generalized Eluder dimension was first introduced in \cite{agarwal2023vo}, where a value-based algorithm was proposed to achieve a nearly optimal regret bound in the online RL setting. Notably, our definition of the function class is somewhat broader, as we apply an unweighted definition of the $D_\cF$ distance. Consequently, the definition of generalized Eluder dimension does not require taking the supremum over weights, allowing for a wider class of MDPs.
\end{remark}

The analyzed policy optimization method needs a policy evaluation subroutine as introduced in Algorithm \ref{alg:tabpe}. To make such a policy evaluation tractable, we make the following realizability assumption. 

\begin{assumption}[Strong realizability of state-action value function class] \label{assumption:realizability}
    For all $h \in H$ and $V_{h + 1}: \cS \to [-H, H]$, there exists a function $f_h \in \cF_h$ such that for all $s \in \cS$, $a \in \cA$, we have $f_h(s, a) = \PP_h V_{h + 1}(s, a)$.
\end{assumption}

\begin{definition}[Covering numbers of function classes] \label{def:covering}
    For each $h \in [H]$, there exists an $\epsilon_\cF$-cover $\cC(\cF_h, \epsilon_\cF) \subseteq \cF_h$ with size $|\cC(\cF_h, \epsilon_\cF)| \le \cN(\cF_h, \epsilon_\cF)$, such that for any $f \in \cF$, there exists $f' \in \cC(\cF_h, \epsilon_\cF)$, such that $\|f - f'\|_\infty \le \epsilon_\cF$. For any $\epsilon_\cF > 0$, we define the uniform covering number of $\cF$ with respect to $\epsilon_\cF$ as $\cN_\cF(\epsilon_\cF) := \max_{h \in [H]} \cN(\cF_h, \epsilon_\cF)$.
\end{definition}

\begin{theorem}[Regret bound for Algorithm \ref{alg:oppo}] \label{thm:regret}
Suppose Assumptions \ref{assumption:realizability}, \ref{assumption:convex} and \ref{assumption:demonstration} hold. if we set $\gamma = H^2$, $\epsilon_\cF = 1 / N$,  $\eta = \sqrt{\log|\cA|} / H \sqrt{K}$, $m = \sqrt{K} / H \sqrt{\log|\cA|}$, $\eta_{\btheta} = O(1 / \sqrt{H^2 K})$, and $N = KH \dim_T(\cF) \log \cN_{\epsilon_\cF}(\cF) / \sqrt{\log|\cA|}$, where $K = \left(\frac{T}{H^2 \dim_T(\cF) \log \cN_{\epsilon_\cF}(\cF)} \right)^{2 / 3}$, then with probability at least $1 - \delta$, Algorithm \ref{alg:oppo} yields a regret of 
\begin{equation}
 \tilde{O}\left(H^{8/3} \big(\dim_T(\cF) \log \cN_{\epsilon_\cF}(\cF)\big)^{1/3} T^{2/3} + \epsilon_E T\right) .  
\end{equation}
\end{theorem}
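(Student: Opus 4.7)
The plan is first to rewrite the regret in a form amenable to standard analysis. Since the intrinsic reward is additive in its expectation, $\lambda\cdot\Int(\pi;\tau^E)=J(\pi,\lambda\cL_{\tau^E})$, so $\ell(\pi,r)=J(\pi^E,r)-J(\pi,\tdr)$ with $\tdr_h:=r_h+\lambda\cL_{\tau^E,h}$. The $J(\pi^E,r)$ term thus cancels in $\ell(\pi^t,r)-\ell(\pi^*,r)$, leaving
\begin{align*}
\text{Regret}(T)=\max_{r\in\cR}\sum_{t=1}^T\bigl[J(\pi^*,\tdr)-J(\pi^t,\tdr)\bigr].
\end{align*}
I then apply the standard two-player decomposition: denote by $r^{k(t)}$ the reward played when $\pi^t$ is acting, and add and subtract terms at $r^{k(t)}$ to split the right-hand side into (i) a reward-player regret on the convex loss $L(\pi,r)=J(\pi^E,r)-J(\pi,r)$, and (ii) a policy-player regret $\sum_t\bigl[V^{\tdr^{k(t)}}_{1,\pi^*}(x)-V^{\tdr^{k(t)}}_{1,\pi^t}(x)\bigr]$ along the \emph{changing} rewards $\tdr^k$.

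For (i), Assumption \ref{assumption:convex} ensures that $\hat L(\pi^k,\cdot)$ is convex in $\btheta$ with $O(1)$-norm gradients, so the projected-gradient step \eqref{eq:reward_update} yields the textbook $O(\sqrt K)$ online-gradient-descent regret. Converting $\hat L$ back to $L$ contributes (a) the expert bias $\epsilon_E$ per episode from Assumption \ref{assumption:demonstration}, and (b) an $\tilde O(H/\sqrt N)$ Monte-Carlo error per episode on the importance-weighted rollout estimator via Azuma--Hoeffding. For (ii), I follow the optimistic-NPG template together with the generalized Eluder-dimension machinery introduced in the preliminaries. The core technical step is to establish, uniformly in $k,h$ and with probability $\ge 1-\delta$, that the output of \ope\ satisfies (a)~\emph{optimism}, $Q^k_h(s,a)\ge Q^{\tdr^k}_{h,\pi^*}(s,a)$, and (b)~the \emph{slack bound} $Q^k_h(s,a)-Q^{\tdr^k}_{h,\pi^k}(s,a)\le 2\,\EE_{\pi^k}\bigl[\sum_{h'\ge h}b_{h'}(s_{h'},a_{h'})\bigm| s_h=s,a_h=a\bigr]$. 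These follow from Assumption \ref{assumption:realizability} (so the least-squares target $\PP_h V_{h+1}$ lies in $\cF_h$), the confidence radius prescribed in \eqref{eq:def:bonus} together with a covering-number union bound over $\cF$, and an induction on $h$ through the Bellman equation. Plugging (a)--(b) into the performance-difference lemma and invoking the standard KL-regularized mirror-descent regret bound $\tilde O(H^2\sqrt{K\log|\cA|})$ for the update on line \ref{line:pu} then yields
\begin{align*}
\sum_{k=1}^K\bigl[V^{\tdr^k}_{1,\pi^*}(x)-V^{\tdr^k}_{1,\pi^k}(x)\bigr]\le \tilde O\bigl(H^2\sqrt{K\log|\cA|}\bigr)+2\sum_{k,h}\EE_{\pi^k}\bigl[b_h(s_h,a_h)\bigr],
\end{align*}
while Cauchy--Schwarz together with Definition \ref{def:ged} controls the cumulative bonus by $\tilde O\bigl(H^{3/2}\sqrt{T\dim_T(\cF)\log\cN_\cF(\epsilon_\cF)}\bigr)$.

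Finally, translating iteration-level regrets to episode-level ones (each $\pi^k$ is used for roughly $N/m$ episodes) and summing the contributions from (i) and (ii) gives a total regret of order
\begin{align*}
\tilde O\!\bigl(\tfrac{N}{m}H^2\sqrt{K\log|\cA|}+H^{3/2}\sqrt{T\dim_T(\cF)\log\cN_\cF(\epsilon_\cF)}+HT/\sqrt N\bigr)+\epsilon_E T,
\end{align*}
and substituting the prescribed $K$, $N$, $m$ balances these contributions and produces the claimed rate. The hardest step will be the joint optimism--slack analysis of \ope\ under the changing rewards $\tdr^k$: the least-squares residual $\hat f_h-\PP_h V_{h+1}$ must be controlled uniformly over every value-function backup $V_{h+1}$ that can arise as $k$ varies, which is what forces both the $\log\cN_\cF(\epsilon_\cF)$ factor and the Eluder-dimension-based confidence radius in \eqref{eq:def:bonus}. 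A secondary subtle point is that $N$ must grow with $K$ to keep the $HT/\sqrt N$ Monte-Carlo error below the dominant policy-player regret; this coupling is precisely what degrades the final rate from $\sqrt T$ to $T^{2/3}$.
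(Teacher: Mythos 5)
Your high-level architecture matches the paper's: cancel $J(\pi^E,r)$, split into a reward-player online-learning regret on $L$ (handled by projected OGD under Assumption \ref{assumption:convex}, with $\epsilon_E$ from Assumption \ref{assumption:demonstration}) and a policy-player regret along the changing rewards $\tdr^k$ (handled by the value-difference lemma, the covering/martingale confidence radius behind \eqref{eq:def:bonus}, and the mirror-descent bound $\log|\cA|/\eta+\eta H^2K/2$), then multiply by the $N/m$ episodes per iteration and balance. The gap is in the one step you flag as hardest-adjacent but then dispatch in a clause: bounding the cumulative bonus by ``Cauchy--Schwarz together with Definition \ref{def:ged}'' as $\tilde O\bigl(H^{3/2}\sqrt{T\dim_T(\cF)\log\cN_\cF(\epsilon_\cF)}\bigr)$. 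That pigeonhole argument is the one for algorithms that keep a \emph{cumulative} dataset and evaluate the bonus at the very state-action pairs that are then appended to it. Algorithm \ref{alg:oppo} is phasic: every $m$ iterations it draws $N$ fresh trajectories from a single policy $\pi^{t_k}$ and \emph{discards} the old data, and within a phase the dataset (hence the bonus function) is frozen. The quantity you actually need to control is $\sum_{k}\EE_{\pi^k}\bigl[\sum_h b^k_h(s_h,a_h)\bigr]$, an expectation under $\pi^k$ of a bonus defined with respect to a non-growing batch $\cD^k$ collected from a \emph{different} policy $\pi^{t_k}$; no telescoping over $T$ episodes is available, and your $\sqrt{T}$-type bound does not follow.

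The paper closes exactly this hole with two ingredients your proposal omits: (i) an in-batch lemma showing that for $N$ i.i.d.\ trajectories from $\pi$, $\EE_{z\sim\pi}[D_{\cF_h}(z;\cD)]=\tilde O\bigl(\sqrt{\dim_N(\cF_h)/N}\bigr)$ (pigeonhole inside the batch plus Azuma--Hoeffding to pass from the sampled sum to the expectation), and (ii) Lemma \ref{lemma:pi-close}, which uses $\eta m\le 1/H^2$ to argue that $\pi^k$ and the data-collecting policy $\pi^{t_k}$ induce comparable visitation measures, so the lemma applies off by a constant. This is also where the theorem's choice $N= KH\dim_T(\cF)\log\cN_{\epsilon_\cF}(\cF)/\sqrt{\log|\cA|}$ is really spent: it makes the per-iteration expected bonus term $\tilde O\bigl(H^2K\sqrt{H\dim\log\cN/N}\bigr)$ comparable to the mirror-descent term $H^2\sqrt{K\log|\cA|}$, and the $T^{2/3}$ rate then comes from paying $N/m$ episodes per iteration — not, as you suggest, from an $HT/\sqrt N$ Monte-Carlo error in $\hat L$ (in the paper that estimation error is absorbed at the iteration level into $I_2$ as $\tilde O(\sqrt{H^2K})$ via Azuma and the policy-stability lemma). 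Your final balancing happens to reproduce the stated rate because your leading term $(N/m)H^2\sqrt{K\log|\cA|}$ coincides with the paper's, but as written the bonus bound and the rationale for the batch size are not justified for this algorithm and need to be replaced by the batch-expectation argument above.
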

\begin{remark}
This is the first theoretical guarantee for imitation learning in MDPs with nonlinear function approximation. As shown in \citet{zhao2023nearly}, the considered setting captures MDPs with bounded Eluder dimension \citep{russo2013eluder} and thus also captures linear MDPs \citep{jin2020provably} as special cases. If we select a policy from $\{\pi^k\}_{k = 1}^K$ uniformly at random, the expected loss for that policy is $O(\epsilon + \epsilon_E)$ when $T = \tilde \Theta(H^8 \dim_T(\cF) \log \cN_{\epsilon_\cF}(\cF) / \epsilon^3)$, which provides a sample complexity bound for Algorthm \ref{alg:oppo}. 
\end{remark}

\section{Practical Implementation of ILDE} \label{sect:ILDE}

\begin{algorithm}[h]
    \caption{Imitation Learning with Double Exploration (ILDE)}
    \label{alg:ILDE}
    \begin{algorithmic}[1]
    \STATE \textbf{input}:
    number of episodes $T$, period of collecting fresh data  $m$, batch size $N$, learning rate $\eta$, demonstration trajectories $\tau^E$, reward function $r^1$, pretrained loss function $\cL_{\tau_E}$.
    \FOR{$t=1,\ldots,T$}
    \STATE Sample trajectories $\cD^t$ from $\pi^t$.
    \STATE
    Update the imitation reward $r^t$ by minimizing the loss of the discriminator trained to distinguish between the demonstration trajectories and $\cD^t$ (e.g. \eqref{eq:discriminator_loss}). 
    \STATE Compute the aggregated reward $\tdr^t(s, a) = r^t(s, a) + \lambda \cL_{\tau_E}(s, a)$ for all state-action pairs $(s, a)$ in $\cD^t$. 
    \STATE Update the current policy $\pi^t$ via policy optimization method (e.g., PPO) with reward $\tdr^t(s, a) + b(s, a)$ where $b(s, a)$ is the exploration bonus and obtain $\pi^{t + 1}$. \label{alg1:line:6}
    \ENDFOR
    \end{algorithmic}
\end{algorithm}
While Algorithm \ref{alg:oppo} provides a solid sample-complexity guarantee, it lacks computational efficiency and is challenging to implement at scale. In this section, we focus on the crucial elements of Algorithm \ref{alg:oppo} and introduce the proposed Imitation Learning with Double Exploration (ILDE) framework. The algorithm, summarized in pseudocode in Algorithm~\ref{alg:ILDE}, includes the following key modules: 
\begin{itemize}[leftmargin=*]
    \item Imitation reward module: This module minimizes the loss of a discriminator which distinguishes between demonstration data $\tau^E$ and data obtained during policy optimization $\cD^t$. Theoretically, the imitation reward function is progressively optimized according to the gap between the average return of demonstration trajectories and that of current trajectories. In practice, to implement the module, we maintain a discriminator $D_\theta$ which continue minimizing the following loss: 
    \begin{small}
       \begin{align} 
        \EE_{(s, a) \sim \tau^E}\big[\log D_\theta(s, a)\big] + \EE_{(s, a) \sim \pi^t}\big[\log(1 - D_\theta(s, a))\big] + \beta\EE_{s \sim \pi^t}\big[d_{\text{KL}}(E'(z|s)|r^t) - I_c \big], \label{eq:discriminator_loss}
    \end{align} \end{small}
    where encoder $E'$ is introduced to incorporate a variational discriminator bottleneck \citep{Peng2019} to improve GAIL, $\beta$ is the scaling weight, and $I_c$ is the information constraint. Further details can be found in Appendix~\ref{app:vail}.
    \item Pretrained uncertainty evaluation module: To implement this module, we make use of GIRIL. That is, we pretrain a VAE model to estimate the transition kernel $\hat\PP$ over the demonstration trajectories $\tau^E$ and compute the intrinsic reward $\cL_{\tau^E, h}(s, a)$ \eqref{eq:intrinsic} for all $(s, a)$ in $\cD^t$. Further details can be found in Appendix~\ref{app:giril}.
    \item Exploration bonus $b$: To analyze the ILDE framework theoretically in Section \ref{sect:theory}, we choose the $D_{\cF}$-uncertainty of state-action pairs as the exploration bonus, which reflects the uncertainty of a state-action pair after collecting enough data using the current policy. To efficiently implement the exploration bonus in practice, we utilize state entropy \citep{seo2021state} in a representation space of a feature extractor $f$. We utilize a $k$-nearest neighbor entropy estimator to calculate the exploration bonus as $b(s,a)=\log(\vert\vert y - y^{k-\text{NN}}\vert\vert_2 + 1)$, where $y=f(s)$ is a state representation from $f$ and $y^{k-\text{NN}}$ is the $k$-nearest neighbor of $y$ within a set of $N$ representations $\{y_1, y_2, \cdots, y_N\}$, where $N$ is the batch size. The state-dependency rather than state-action dependency of the bonus is motivated by the technique of entropy regularisation in the policy optimization already encouraging spread over the action space. Further details can be found in Appendix~\ref{app:state_entropy}.
    \item Policy optimization module: While the above modules provide the design of the reward, the policy optimization module optimises the policy for the objective. For policy optimization, we use an actor-critic variant of proximal policy optimization (PPO) \citep{schulman2017proximal} with generalized advantage updating.
\end{itemize}

\section{Experiments}
\label{sect:exp}
We now conduct experiments based on four key hypotheses. First, we hypothesize that pure imitation learning methods such as VAIL will fail with limited demonstrations. Second, we hypothesize that a pre-trained model-based exploration reward such as GIRIL alone provides a stable improvement but may not reach a wide set of states due to the curiosity bias preferring certain states. Third, the state entropy reward bonus provides an exploration reward which provides a high reachability across a wide variety of states, thereby improving sample efficiency and overall performance. Fourth, VAIL’s imitation reward can provide additional performance gains compared to pure exploration (i.e. exploration bonus and/or curiosity reward) when the imitation reward is reliable while there is no effect when the imitation reward is unreliable.

\subsection{Atari games} 
To test these hypotheses, we compare our proposed ILDE to VAIL, GIRIL, and ablations on six Atari games within OpenAI Gym \citep{brockman2016openai} in a challenging setting where the agent receives even less demonstration data than in so-called one-life demonstration data \citep{yu2020intrinsic}. The imitation learning agents are trained according to the design of the imitation reward and/or any potential bonuses but are evaluated based on their actual scores on the Atari games. We summarize more experimental details in the Appendix \ref{sect:appendix_exp}. 

\textbf{Demonstrations.} Our setting is very challenging compared to existing work because the demonstration data are very sparse, since they are based on partial one-life demonstrations. A one-life demonstration only contains the states and actions performed by an expert player until they die for the first time in a game. We generate the one-life demonstrations using a PPO agent trained with ground-truth rewards for 10 million simulation steps. Table \ref{tab:demo_len} in the Appendix \ref{subsect:appendix_demo_length} shows that a one-life demonstration contains much less data than a full-episode demonstration. To make the problem even more challenging, the imitation learning agents receive demonstrations which comprise \textit{only 10\% of a full one-life demonstration}.

\textbf{Baselines.} We compare our ILDE with the following baselines: (1) VAIL \citep{Peng2019}, an improved GAIL variant using variation discriminator bottleneck, which updates the policy based on $r^k$ only; (2) GIRIL \citep{yu2020intrinsic}, the generative intrinsic reward-driven imitation learning method; (3) ILDE w/o $b$, an ablation of ILDE without the exploration bonus $b$; and (4) ILDE w/o $r^k$, an ablation of IDLE without the imitation reward $r^k$.

\begin{table}[!h]
    \centering
    \caption{Average return of Expert demonstrator, VAIL, GIRIL, ILDE w/o $\lambda\mathcal{L}_{\tau_E}$, ILDE w/o $b$, ILDE w/o $r^k$ and ILDE with 10\% of one-life demonstration data on six Atari games. 
    The results are reported in the format of $\mathrm{mean} \pm {\mathrm{std}}$ over 10 random seeds with better-than-GIRIL performance in bold. ``\#Games$>$Expert'' denotes the number and ratio of games that an imitation learning method outperforms the expert. ``Improve vs Expert'' denotes the average performance improvements of IL methods versus the expert across 6 Atari games.}
    \scalebox{0.65}{
    \begin{tabular}{c|c|cc|ccc|c}
    \toprule
        Atari Games& Expert  & VAIL ($r^k$)  & GIRIL  & ILDE w/o $\lambda\mathcal{L}_{\tau_E}$ & ILDE w/o $b$  & ILDE w/o $r^k$  & ILDE  \\ \midrule
        BeamRider  & 1,918$\pm$645  & 91$\pm$116  & 8,524$\pm$1,085  & 3,233$\pm$3,340 & 8,521$\pm$1,234  & \textbf{11,453$\pm$2,416}  & \textbf{11,453$\pm$2,319}  \\
        DemonAttack & 8,820$\pm$2,893   & 979$\pm$863  & 72,381$\pm$19,851 & 143$\pm$120  & \textbf{78,513$\pm$11,553}  & 60,931$\pm$10,376  & 64,498$\pm$13,294  \\
        BattleZone  & 22,410$\pm$5,839  & 5,081$\pm$4,971  & 25,203$\pm$22,868 & 5,412$\pm$1,816 & 18,179$\pm$21,181  & \textbf{43,708$\pm$22,733}  & \textbf{59,109$\pm$25,575}  \\
        Qbert  & 6,246$\pm$2,964  & 3,754$\pm$4,460  & 76,491$\pm$81,265 & 10,849$\pm$5,018 & 26,900$\pm$52,146  & 54,169$\pm$68,305  & 70,566$\pm$75,310  \\
        Krull  & 6,520$\pm$941 & 7,937$\pm$8,183  & 20,935$\pm$23,796 & 857$\pm$916 & 16,715$\pm$23,153  & \textbf{27,495$\pm$28,773}  & \textbf{23,716$\pm$16,790}  \\
        StarGunner & 22,495$\pm$11,516  & 219$\pm$182  & 542$\pm$188 & 	764$\pm$455 & 478$\pm$239  & \textbf{16,526$\pm$32,946}  & \textbf{25,361$\pm$38,557}  \\
        \midrule
        \#Games$>$Expert  &  0 / 0\%  &   1 / 16.7\% & 5 / 83.3\% & 2 / 33.3\% & 4 / 66.7\%  &   5 / 83.3\%   &  \textbf{6 / 100\%} \\
        \midrule
        Improve vs Expert & 1.0 &  0.37 & 4.87  & 0.64 & 3.50 & 4.71  & \textbf{5.33} \\
    \bottomrule
    \end{tabular}
    
    }
  \label{tab:ilde_entropy_perf}
\end{table}

\textbf{Results.} Table \ref{tab:ilde_entropy_perf} compares the performance of our ILDE and other baselines. ILDE outperforms the other baselines in terms of achieving higher average returns and outperforming experts in more games. Notably, ILDE achieves an average performance that is 5.33 times that of the expert demonstrator across the 6 Atari games. More specifically, ILDE outperforms the expert in all of the 6 Atari games and outperforms GIRIL in 4 games. The ablation study in Table \ref{tab:ilde_entropy_perf} indicates the significance of each reward component. ILDE achieves much higher sample efficiency improvements than GIRIL, as shown in Table~\ref{tab:sample_efficiency} and \ref{tab:sample_efficiency_improve_vs_giril} of Appendix \ref{sect:app_addtional_results}. Further, we also show in Appendix~\ref{app:noisydemos} that ILDE is robust to noise in the demonstrations. Last, we note that the limited demonstration setting is challenging for IQ-Learn (see Table~\ref{tab:IQ-Learn} in Appendix~\ref{app:IQL-HyPE}).

\subsection{Continuous control tasks} 
Additionally, we also conduct experiments on the MuJoCo environments.
Table \ref{tab:mujoco} compares the performance of ILDE and other baselines. ILDE consistently outperforms other baselines in the four continuous control tasks by achieving higher average returns over 10 random seeds. We summarized more experimental details for MuJoCo experiments in Appendix \ref{app:mujoco}. Based on our current experiments, we observed that ILDE performs exceptionally well in games, while its improvement on MuJoCo is less significant. This may suggest that the current design of the intrinsic reward is better suited for taskswhere the extrinsic reward is more aligned with the objective of ``seeking novelty''.This may also indicate that our algorithm does not (only) extrapolate the intention in the partial demonstration.
Additional experiments (see Table~\ref{tab:HyPE-ILDE} in Appendix~\ref{app:IQL-HyPE}) show the potential of ILDE to generalize to other adversarial IL methods.
\begin{table}[!h]
    \centering
    \caption{Average return of VAIL, GIRIL, and ILDE on the MuJoCo tasks. The average returns are calculated on the last 4 evaluation points. The results are reported in the format of $\mathrm{mean} \pm {\mathrm{std}}$ over 10 random seeds with the best performance in bold.}
    \scalebox{0.85}{
    \begin{tabular}{c|ccc}
    \toprule
        Tasks & VAIL ($r^k$) & GIRIL & ILDE \\ \midrule
        Reacher & -499$\pm$183  & -60$\pm$24  & \textbf{-51$\pm$15}  \\
    Hopper & 1,264$\pm$783  & 1,447$\pm$273  & \textbf{1,878$\pm$690 } \\
    Walker2d & 879$\pm$825  & 997$\pm$575  & \textbf{998$\pm$694}  \\
    HumanoidStandup & 52,635$\pm$6,370  & 76,134$\pm$10,826  & \textbf{77,887$\pm$6,123}  \\
    \bottomrule
    \end{tabular}
    }
    \label{tab:mujoco}
\end{table}

\section{Conclusion}
\label{sect:conclusion}
Achieving beyond-expert performance and improving sample complexity in complicated tasks are two crucial challenges in imitation learning. 
To address these difficulties, we propose Imitation Learning with Double Exploration (ILDE), which optimizes an uncertainty-regularized discrepancy that combines the distance to the expert policy with cumulative uncertainty during exploration. Theoretically, we propose ILDE-NPG with a theoretical regret guarantee, a first for imitation learning in MDPs with nonlinear function approximation. Empirically, we introduce a flexible ILDE framework with efficient exploration strategies and demonstrate ILDE's outstanding performance over existing baselines in average return and sample complexity. It is also worth noting that our analysis cannot be directly applied to the practical version implemented in experiments. 

\noindent\textbf{Limitation of our work.} Our method assumes that the target task to be imitated is aligned with the concept of seeking novelty or curiosity. Its ability to achieve performance beyond the demonstration is not only due to the algorithm's capacity to understand and extrapolate the intention behind the partial demonstration but also due to an alignment between the extrinsic reward and seeking novelty.

\clearpage
\subsubsection*{Acknowledgments}

We thank the anonymous reviewers and area chair for their helpful comments. HZ and QG are supported in part by the NSF grants CPS-2312094, IIS-2403400 and Sloan Research Fellowship. HZ is also supported by UCLA-Amazon Science Hub Fellowship. The views and conclusions contained in this paper are those of the authors and should not be interpreted as representing any funding agencies. XY, DB, and IWT are supported by CFAR, Agency for Science, Technology and Research, Singapore. This project is supported by the National Research Foundation, Singapore and Infocomm Media Development Authority under its Trust Tech Funding Initiative. Any opinions, findings and conclusions or recommendations expressed in this material are those of the authors and do not reflect the views of National Research Foundation, Singapore and Infocomm Media Development Authority.

\bibliography{ref}  
\bibliographystyle{iclr2025_conference}

\appendix 

\section{Motivation table}
\label{app:motivation_table}
Table \ref{tab:motivation_table} illustrates the strengths of each component in ILDE, which helps to explain the motivation of ILDE. 
\begin{table}[!h]
    \centering
    \caption{Comparison of three reward components with three attributes.}
    \label{tab:motivation_table}
    \begin{tabular}{c|c|c|c}
    \toprule
         & Stability & Exploration & Imitation \\ \midrule
       GIRIL reward & ++ & + & + \\ \midrule
       Bonus $b$  & + & ++ & - \\ \midrule
       VAIL $r^k$  & - & - & ++ \\ 
    \bottomrule
    \end{tabular}
\end{table}

\section{Auxiliary Experimental Details}
\label{sect:appendix_exp}

\subsection{Demonstration Lengths in the Atari environment}

Table \ref{tab:demo_len} compares the lengths of one-life demonstrations and full-episode demonstrations in Atari games. In the experiments, we only use 10\% of the one-life demonstrations for imitation learning, which is more challenging than previous work in learning with limited demonstrations \citep{yu2020intrinsic}. 
\label{subsect:appendix_demo_length}
\begin{table}[h]
    \centering
    \caption{Demonstration lengths in the Atari environment.} 
    \begin{tabular}{c|cc|cc}
    \toprule
          \multirow{2}{*}{Atari Games} &  \multicolumn{2}{c|}{One-life Demo.} & \multicolumn{2}{c}{Full-episode Demo.} \\
          \cmidrule{2-5}
          &  Length & \# Life & Length & \# Life \\ 
         \midrule
         Beam Rider & 702 & 1 & 1,412 & 3 \\
         Demon Attack & 2,004  & 1  & 11,335 & 4 \\ 
         Battle Zone & 260 & 1 & 1,738 & 5 \\
         Q*bert & 787 & 1 & 1,881 & 4 \\
         Krull & 662 & 1 & 1,105 & 3 \\
         Star Gunner & 118 & 1 & 2,117 & 5 \\ 
         \bottomrule
    \end{tabular}
    \label{tab:demo_len}
\end{table}

\subsection{Experimental Setup and Additional Results}

This subsection presents details of VAIL, GIRIL, state entropy bonus, experimental setup, and additional results. 
\subsubsection{Details of the variational adversarial imitation learning (VAIL)}
\label{app:vail}
Generative adversarial imitation learning (GAIL) \citep{Ho2016} regards imitation learning as a distribution matching problem, and updates policy using adversarial learning \citep{goodfellow2020generative}. Previous work has demonstrated that GAIL does not work well in high-dimensional environments such as Atari games \citep{brown2019extrapolating}. Variational adversarial imitation learning (VAIL) \citep{Peng2019} improves GAIL by compressing the information via a variational information bottleneck (VDB). VDB constrains the information flow in the discriminator using an information bottleneck. By enforcing a constraint on the mutual information between the observations and the discriminator’s internal representation, VAIL significantly outperforms GAIL by optimizing the following objective:
\begin{equation}
    \begin{aligned}
    \min_{D_\theta, E'} \max_{\beta \geq 0}  & \EE_{(s, a) \sim \tau^E}\Big[\EE_{z\sim E'(z|s)} \big[\log (-D_\theta(z))\big]\Big] + \EE_{(s, a) \sim \pi^k}\Big[\EE_{z\sim E'(z|s)}\big[-\log(1 - D_\theta(z))\big]\Big] \\
    & + \beta\EE_{s \sim \tilde{\pi}}\big[d_{\text{KL}}(E'(z|s)|r^k) - I_c \big], \\
    \end{aligned}
    \label{eq:vail}
\end{equation}
where $\tilde{\pi}=\frac{1}{2}\pi^E + \frac{1}{2}\pi^k$ represents a mixture of the expert policy and the agent’s policy, $E'$ is the encoder for VDB, $\beta$ is the scaling weight, and $I_c$ is the information constraint.  The reward for $\pi$ is then specified by the discriminator $r_t=-\log\big(1-D_\theta(\boldsymbol{\mu}_{E'}(s)) \big)$.

\subsubsection{Details of the generative intrinsic reward driven imitation learning (GIRIL)}
\label{app:giril}
Previous inverse reinforcement learning (IRL) methods usually fail to achieve expert-level performance when learning with limited demonstrations in high-dimensional environments. To address this challenge, \cite{yu2020intrinsic} proposed generative intrinsic reward-driven imitation learning (GIRIL) to empower the agent with the demonstrator's intrinsic intention and better exploration ability. This was achieved by training a novel reward model to generate intrinsic reward signals via a generative model. 
Specifically, GIRIL leverages a conditional VAE \citep{sohn2015learning} to combine a backward action encoding model and a forward dynamics model into a single generative model. The module is composed of several neural networks, including recognition network $q_\phi(z|s_t, s_{t+1})$, a generative network $p_\theta(s_{t+1}|z, s_t)$, and prior network $p_\theta(z|s_t)$. GIRIL refers to the recognition network (i.e. the probabilistic \textit{encoder}) as a backward action encoding model, and the generative network (i.e. the probabilistic \textit{decoder}) as a forward dynamics model. Maximizing the following objective to optimize the module:
\begin{equation}
\label{eq:gir_obj}
  \begin{aligned}
  J(p_\theta, q_\phi) & = \mathbb{E}_{q_\phi(z|s_t, s_{t+1})}[\log{p_\theta(s_{t+1}|z, s_t)}] 
     -  d_{\text{KL}}(q_\phi(z|s_t, s_{t+1})\|p_\theta(z|s_t)) \\
    & - \alpha d_{\text{KL}}(q_\phi(\hat{a}_t|s_t, s_{t+1})|\pi_E(a_t|s_t) ]
  \end{aligned}
\end{equation}
\noindent where $z$ is the latent variable, $\pi_E(a_t|s_t)$ is the expert policy distribution, $\hat{a}_t=\textrm{Softmax}(z)$ is the transformed latent variable, $\alpha$ is a positive scaling weight. The reward model will be pre-trained on the demonstration data and used for inferring intrinsic rewards for the policy data. The intrinsic reward is calculated as the reconstruction error between $\hat{s}_{t+1}$ and $s_{t+1}$:
\begin{equation}
\label{eq:giril_reward}
    r_t = \|\hat{s}_{t+1} - s_{t+1} \|_2^2
\end{equation}
\noindent where $\|\cdot\|_2$ denotes the ${\rm L}2$ norm, and $\hat{s}_{t+1}=\text{decoder}(a_t,s_t)$.

\subsubsection{Details of state entropy bonus}
\label{app:state_entropy}
State entropy maximization has been demonstrated to be a simple and compute-efficient method for exploration \citep{seo2021state}. The key idea for this method to work in a high-dimensional environment is to utilize a $k$-nearest neighbor state entropy estimator in the state representation space. 

\textbf{$k$-nearest neighbor entropy estimator.}
Let $X$ be a random variable with a probability density function $p$ whose support is a set $\mathcal{X} \subset \RR^d$. Then its differential entropy is given by 
\begin{align*}
    \mathcal{H}(X)=-\int_{\mathcal{X}} p(x) \log p(x) dx .
\end{align*}
Since estimating $p$ is not tractable for high-dimensional data, a particle-based $k$-nearest neighbors ($k$-NN) entropy estimator \citep{singh2003nearest} can be used:
\begin{align}
  \mathcal{H}(X) & \approx \frac{1}{N}\sum_{i=1}^N \log \frac{N \cdot \|x_i-x_i^{k-\text{NN}} \|_2^d \cdot \hat{\pi}^{\frac{d}{2}}}{k \cdot \Gamma(\frac{d}{2}+1)} + C_k \nonumber \\
                            & \propto \frac{1}{N}\sum_{i=1}^N\log \|x_i-x_i^{k-\text{NN}} \|_2 , \label{eq:entropy-approx}
\end{align}
where $x_i^{k-\text{NN}}$ is the $k$-nearest neighbor of $x_i$ within a set of $N$ representations $\{x_1, x_2, \cdots, x_N \}$, $\Gamma$ is the gamma function, $\hat{\pi}$ refers to an estimate of the number $\pi$ (as opposed to the policy), and $\log(k - \psi)$ where $\psi$ is the digamma function.

\textbf{State entropy estimate as bonus.} Following \cite{seo2021state}, we define the bonus to be proportional to the state entropy estimate in \eqref{eq:entropy-approx},
\begin{equation}
    b(s_i) := \log(\|y_i-y_i^{k-\text{NN}}\|_2 + 1),
\end{equation}
where $y_i=f(s_i)$ is a fixed representation from a state feature extractor $f$ and $y_i^{k-\text{NN}}$ is the $k$-nearest neighbor of $y_i$ within a set of $N$ representations $\{y_1, y_2, \cdots, y_N \}$. The use of a fixed representation space produces a more stable intrinsic reward since the distance between a given pair of states does not change during training \citep{seo2021state}. In our implementation, we use the identity function as the feature extractor $f$.

\subsubsection{Experimental setup} 
\label{sect:exp_setup}
For the GIRIL and ILDEs, our first step was to train a reward model for each game using the 10\% of a one-life demonstration. The training was conducted with the Adam optimizer at a learning rate of 3e-4 and a mini-batch size of 32 for 1,000 epochs. In each training epoch, we sampled a mini-batch of data every four states. To evaluate the quality of our learned reward, we used the trained reward learning module to produce intrinsic rewards for policy data and trained a policy to maximize the inferred intrinsic rewards via PPO. For VAIL, we trained the discriminator using the Adam optimizer with a learning rate of 3e-4. The discriminator was updated at every policy step. We trained the PPO policy for all imitation learning methods for 50 million steps. We compare ILDE with baselines in the measurement of average return and sample efficiency. The average return was measured using the true rewards in the environment. We measured the sample efficiency based on the step after which an imitation learning method can continuously outperform the expert until the training ends. All the experiments are executed in an NVIDIA A100 GPU with 40 GB memory. 

\textbf{Network architectures.} We implement our ILDE and all baselines in the feature space of a state feature extractor $f$ with 3 convolutional layers. The dimension of the state feature is 5184. In the state feature space, we use 3-layer MLPs to implement the discriminator for VAIL and the encoder and decoder for GIRIL. For a fair comparison, we used an identical policy network for all methods. We used the actor-critic approach for training PPO policy for all imitation learning methods. Table \ref{tab:atari_policy_arch} shows the architecture details of the state feature extractor and the actor-critic network. Table \ref{tab:atari_mlp_arch} shows the MLP architectures, i.e., GIRIL's \textit{encoder} and \textit{decoder} and VAIL's discriminator.

\begin{table}[!h]
    \centering
    \caption{Architectures of state feature extractor and actor-critic network for Atari games. } 
    \begin{tabular}{c|c|c}
         \hline
        State feature extractor  & \multicolumn{2}{c}{Actor-Critic network} \\ \hline
        $4\times84\times84$ States   & \multicolumn{2}{c}{$4\times84\times84$ States}  \\ \hline
         $3 \times 3$ conv, 32 LeakyReLU & \multicolumn{2}{c}{$8 \times 8$ conv, 32, stride 4, ReLU} \\
         $3 \times 3$ conv, 32 LeakyReLU & \multicolumn{2}{c}{$4 \times 4$ conv, 64, stride 2, ReLU} \\
         $3 \times 3$ conv, 64 LeakyReLU  & \multicolumn{2}{c}{$3 \times 3$ conv, 32, stride 1, ReLU} \\ \cmidrule{1-3}
         flatten $64\times9\times9$ $\rightarrow$ 5184  & dense $32 \times 7 \times 7$ $\rightarrow$ 512 & dense $32 \times 7 \times 7$ $\rightarrow$ 1 \\
         & Categorical Distribution & \\ \midrule
         State features  & Actions & Values \\ \hline  
    \end{tabular}
    \label{tab:atari_policy_arch}
\end{table}

\begin{table}[!h]
    \centering
    \caption{Architectures of MLP-based GIRIL's encoder and decoder, and VAIL's discriminator. } 
    \scalebox{0.8}{
    \begin{tabular}{c|c|c}
    \toprule
        \multicolumn{2}{c|}{GIRIL} & VAIL \\ \midrule
        encoder & decoder & discriminator \\ \midrule
        5184 State features and next-state features & Actions and 5184 State features & Actions and 5184 State features \\ \midrule
       dense 5184$\times$2 $\rightarrow$ 1024, LeakyReLU  & dense 5182+$\#$Actions $\rightarrow$ 1024, LeakyReLU  & dense 5182 $\rightarrow$ 1024, LeakyReLU  \\
       dense 1024 $\rightarrow$ 1024, LeakyReLU  & dense 1024 $\rightarrow$ 1024, LeakyReLU  & dense 1024 $\rightarrow$ 1024, LeakyReLU  \\
       dense 1024 $\rightarrow$ $\boldsymbol{\mu}$, dense 1024 $\rightarrow$ $\boldsymbol{\sigma}$ & dense 1024 $\rightarrow$ 5184 & split 1024 into 512, 512\\
       reparameterization $\rightarrow$ $\#$Actions & & dense 512 $\rightarrow$ $\boldsymbol{\mu}_{E'}$, dense 512 $\rightarrow$ $\boldsymbol{\sigma}_{E'}$  \\
       & & reparameterization $\rightarrow$ 1 \\
       \midrule
      Latent variables & 5184 Predicted next state features & $\boldsymbol{\mu}_{E'}$, $\boldsymbol{\sigma}_{E'}$, discrimination \\ 
    \bottomrule
    \end{tabular}
    }
    \label{tab:atari_mlp_arch}
\end{table}

\textbf{Hyperparameter settings.}
We implement all imitation learning methods and experiments based on the GitHub repository by \cite{pytorchrl}. Additional parameter settings are explained in Table~\ref{tab:experiment_params}. Note that $\lambda$, which controls the curiosity bonus was set without tuning to $\lambda=10$ in our experiments. We use the same hyperparameter setting for the different experiments within a domain (Atari vs MuJoCo), apart from a multiplicative constant based on the range of the curiosity. However, to further explore its effect, we perform a parametric study with $\lambda$ ranging in $\{20,10, 5,2\}$. The results in Table~\ref{tab:lambda} show that ILDE’s performance slightly increases with decreasing $\lambda$.

\begin{table}[!h]
    \centering
    \caption{Hyperparameter settings for Atari games.}
    \label{tab:experiment_params}
    \begin{tabular}{l|l}
    \toprule
    Parameter & Setting \\
    \midrule
    Actor-critic learning rate & $2.5\text{e-}4$, linear decay \\
    PPO clipping & $0.1$  \\
    Reward model learning rate & $3\text{e-}4$\\
    Discount factor & $0.99$ \\
    GAE Lambda & $0.95$ \\
    Critic loss coefficient & $0.5$  \\
    Entropy regularisation & $0.01$ \\
    Epochs per batch & $4$  \\
    Batch size & $32 \times  128$ (parallel workers $\times$ time steps) \\
    Mini batch size & $32$ \\ 
    Evaluation frequency & every $200$ policy updates \\
    \midrule
    GIRIL objective $\alpha$ & $100.0$ \\
    Mini batch size for training the reward model & $32$ \\
    Total training epoch of reward model & $1,000$ \\ 
    \midrule
    VAIL's bottleneck loss weight $\beta$ & $1.0$ \\
    Information constraint $I_c$ & $0.2$ \\
    \midrule
    $\lambda$ in Algorithm \ref{alg:ILDE} & 10.0 \\
    \bottomrule
    \end{tabular}
\end{table}

\begin{table}[!h]
    \centering
        \caption{The effect of $\lambda$ on ILDE performance in the BeamRider task.}
    \begin{tabular}{c c c c}
    \toprule
    20 & 10 & 5 & 2 \\
    \midrule
        8,973 $\pm$ 1,315 &	11,453  $\pm$ 2,319 &	11,947  $\pm$
1,228 &	12,600  $\pm$ 1,351 \\
\bottomrule
    \end{tabular}
    \label{tab:lambda}
\end{table}

\subsubsection{Additional results}
\label{sect:app_addtional_results}
To better illustrate the significance of the improvement of ILDE against the expert demonstrator, we normalize the results by stating that the expert was 1.0. The normalized results are summarized in Table \ref{tab:return_improve_vs_expert}. On average, VAIL is worse than the expert, only outperforming the expert in Krull. GIRIL archives a performance that is 4.87 times the expert, performing the best in Q*bert. ILDE is the best, achieving a performance that is 5.33 times the expert. Figure \ref{fig:atari_curves_all} illustrates the learning curves of imitation learning methods in the 6 Atari games over 10 random seeds. 

\begin{table}[!h]
    \centering
    \caption{Average performance improvements of imitation learning methods versus the expert demonstrator. The expert performance is regarded as 1.0. ``Improve vs Expert'' denotes the average improvements of IL methods versus the expert across 6 Atari games.}
    \begin{tabular}{c|c|cc|cc|c}
    \toprule
        Atari Games & Expert &VAIL ($r^k$) & GIRIL & ILDE w/o $b$ & ILDE w/o $r^k$ & ILDE \\ \midrule
        BeamRider & 1.0 & 0.05  & 4.44  & 4.44  & 5.97  & 5.97  \\
        DemonAttack & 1.0 & 0.11  & 8.21  & 8.90  & 6.91  & 7.31  \\
        BattleZone & 1.0 & 0.23  & 1.12  & 0.81  & 1.95  & 2.64  \\
        Q*bert & 1.0 & 0.60  & 12.25  & 4.31  & 8.67  & 11.30  \\
        Krull & 1.0 & 1.22  & 3.21  & 2.56  & 4.22  & 3.64  \\
        StarGunner & 1.0 & 0.01  & 0.02  & 0.02  & 0.73  & 1.13 \\ \midrule
        Improve vs Expert & 1.0 & 0.37 & 4.87  &  3.50 & 4.71  & \textbf{5.33} \\
    \bottomrule
    \end{tabular}
    \label{tab:return_improve_vs_expert}
\end{table}

\begin{figure}[!h]
    \centering
    \stackunder{\includegraphics[width=0.96\textwidth]{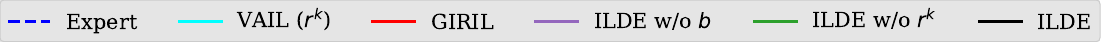}}{}
    \subfigure[BeamRider.]
    {\includegraphics[width=0.32\textwidth]{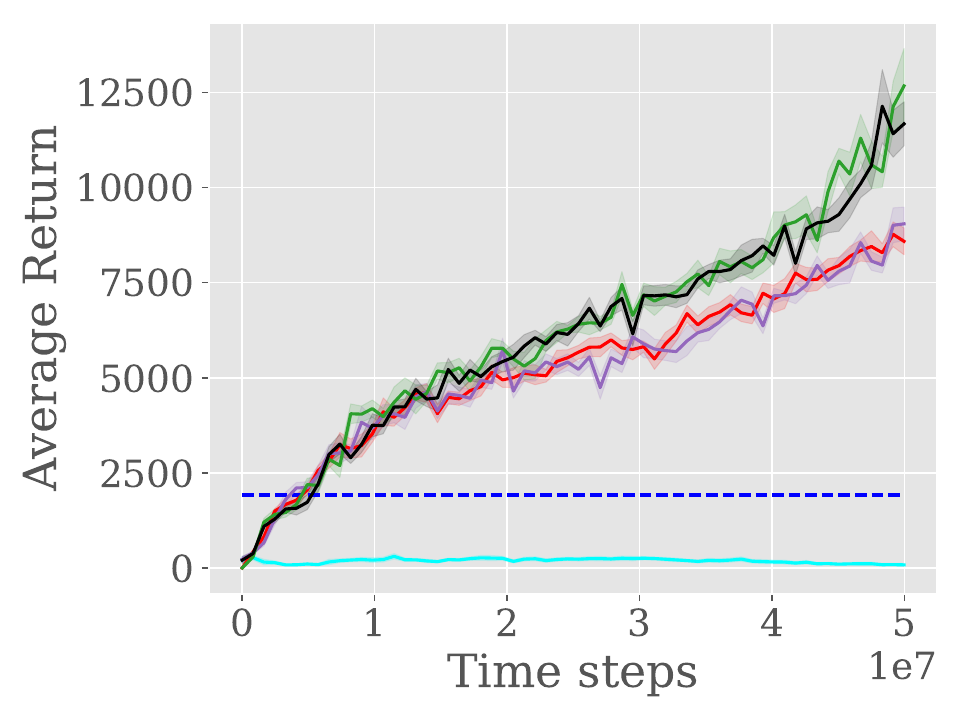}}
    \subfigure[DemonAttack.]
    {\includegraphics[width=0.32\textwidth]{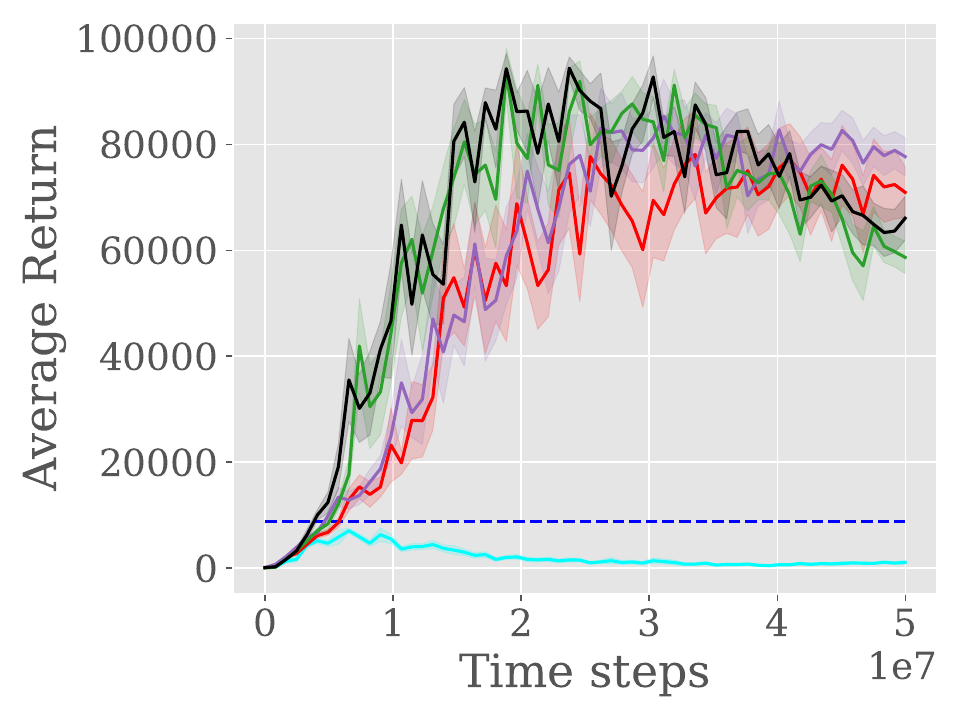}}
    \subfigure[BattleZone.]
    {\includegraphics[width=0.32\textwidth]{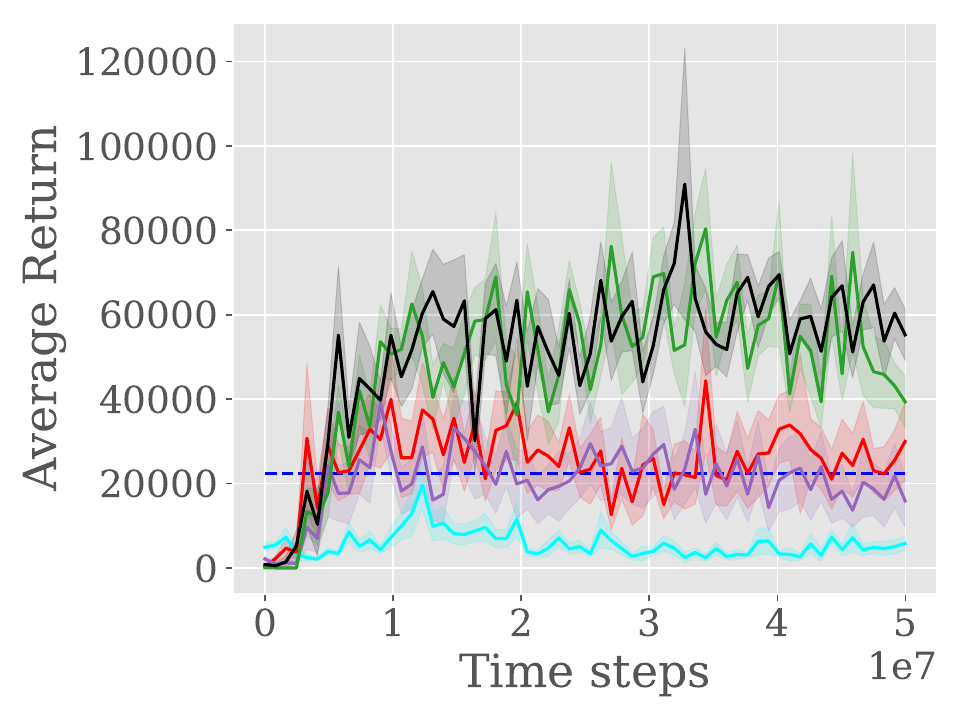}}\\
    \subfigure[Q*bert.]
    {\includegraphics[width=0.32\textwidth]{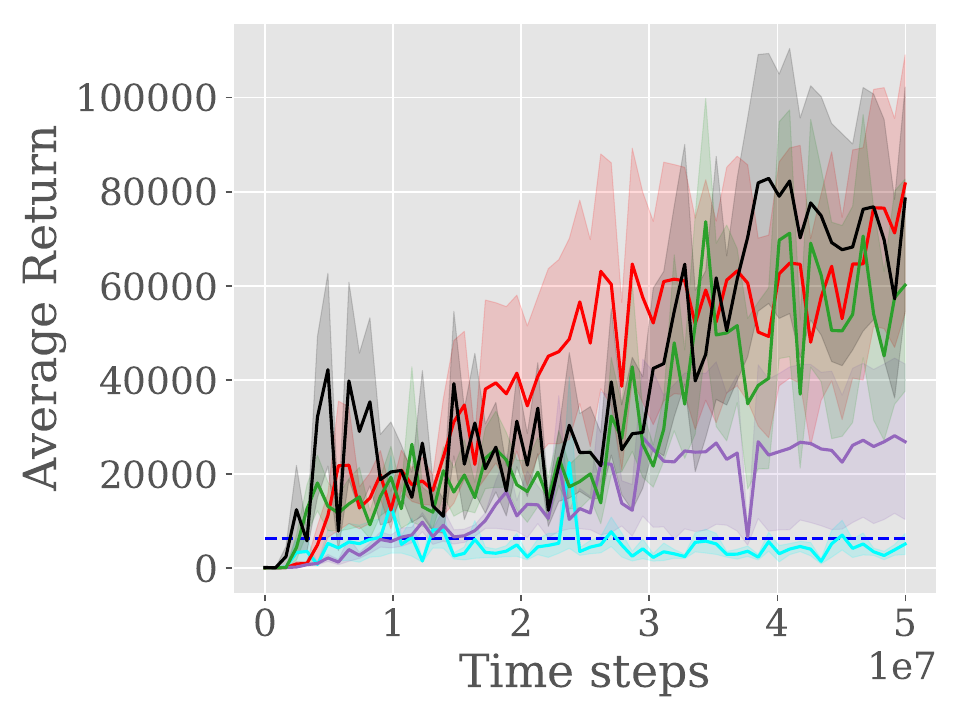}}
    \subfigure[Krull.]
    {\includegraphics[width=0.32\textwidth]{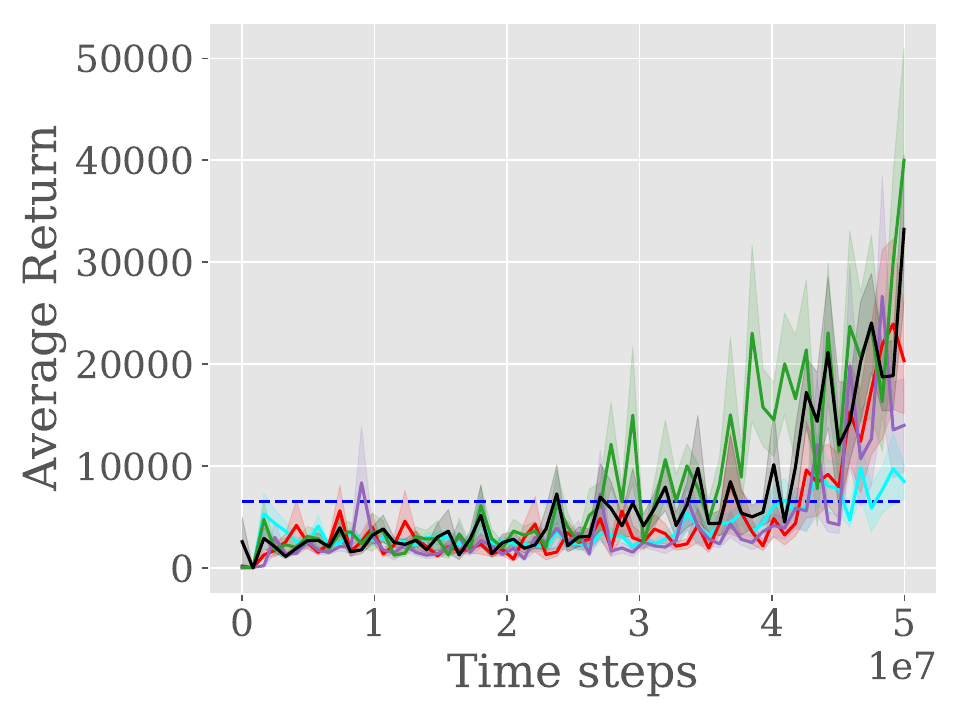}}
    \subfigure[StarGunner.]
    {\includegraphics[width=0.32\textwidth]{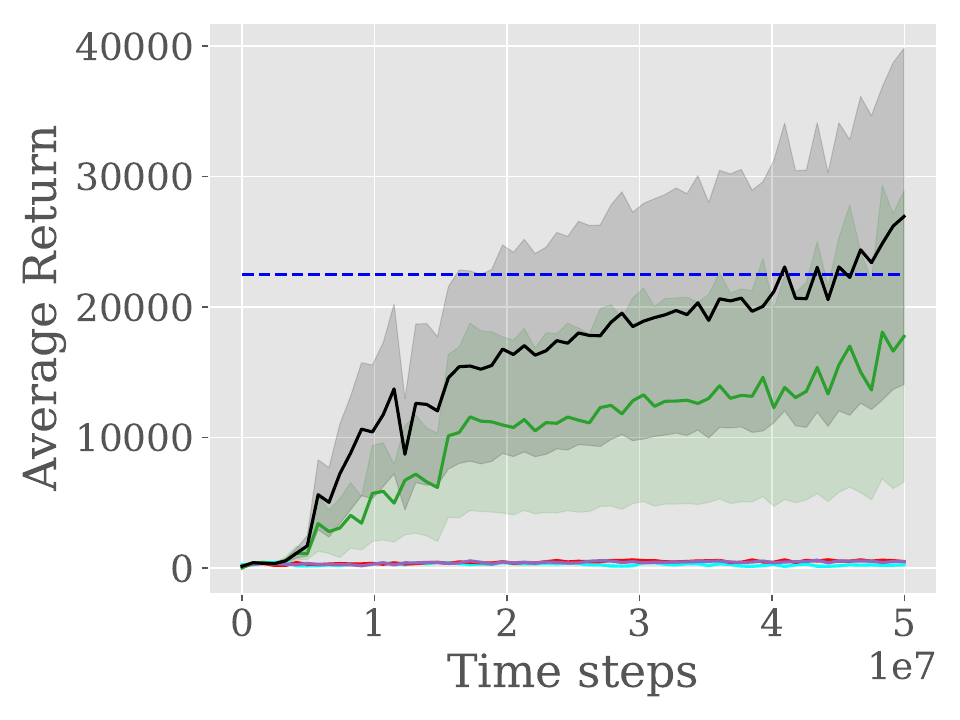}}
    \caption{Average return vs. number of simulation steps on Atari games. The solid lines show the mean performance over 10 random seeds. The shaded area represents the standard deviation from the mean. The blue dotted line denotes the average return of the expert demonstrator. The area above the blue dotted line means performance beyond the expert.}
    \label{fig:atari_curves_all}

\end{figure}

\begin{table}[!h]
    \centering
\caption{The sample efficiency of imitation learning methods. The metric for sample efficiency is based on the ratio $t/T$, where $t$ is the first time step such that the agent achieves at least expert performance continuously within the interval $[t,T]$. In Atari, $T$ is 50 million time steps.}
    \label{tab:sample_efficiency}
    \begin{tabular}{c|cc|cc|c}
    \toprule
        Atari Games & VAIL ($r^k$) & GIRIL & ILDE w/o $b$ & ILDE w/o $r^k$  & ILDE\\ \midrule
        BeamRider & -  & 9.84\%  & \textbf{8.20\%}  & 9.84\%  & 11.48\%  \\
        DemonAttack & -  & 13.12\%  & \textbf{9.84\%}  & \textbf{11.48\%}  & \textbf{8.20\%}  \\ 
        BattleZone & -  & 98.31\%  & -  & \textbf{11.48\%}  & \textbf{9.84\%}  \\
        Qbert & -  & 9.84\%  & 21.31\%  & \textbf{6.56\%}  & \textbf{8.20\%}  \\
        Krull & 96.67\%  & 85.20\%  & 91.76\%  & \textbf{72.10\%}  & \textbf{83.57\%}  \\
        StarGunner & -  & -  & -  & -  & \textbf{93.40\%} \\
    \bottomrule
    \end{tabular}
\end{table}

\begin{table}[!h]
    \centering
    \caption{Sample efficiency improvements of VAIL and ILDE methods versus the GIRIL, defined as $\frac{t_{G} - t_{X}}{t_G}$, where $t_{G}$ is the time for GIRIL to reach expert-level and $t_{X}$ is the time for the method of comparison to reach expert-level (continuously until time $T$). ``Improve vs GIRIL'' denotes the average improvements of IL methods versus the GIRIL across 6 Atari games. To make the scores feasible, we limit the maximum of $t_X$ to $T$.}
    \begin{tabular}{c|cc|cc|c}
    \toprule
        Atari Games & VAIL ($r^k$) & GIRIL & ILDE w/o $b$ & ILDE w/o $r^k$ & ILDE \\ \midrule
        BeamRider   & -916.26\% &  0\% &  16.67\%   &  0\%       &  -16.67\%  \\
        DemonAttack & -662.21\% &  0\% &  25\%      &  12.5\%  &  37.5\%    \\
        BattleZone  & -1.71\%   &  0\% &  -1.72\%   &  88.32\% &  90.06\%   \\
        Q*bert      & -916.26\% &  0\% &  -116.57\% &  33.33\% &  16.67\%   \\
        Krull       & -13.46\%  &  0\% &  -7.69\%   &  15.37\% &  1.91\%    \\
        StarGunner  & 0\%       &  0\% &  0\%         &  0\%       &  6.6\%     \\ 
        \midrule
        Average &  -418.32\%  &  0\% &    -14.05\%    & 24.92\%  & 22.68\%  \\
    \bottomrule
    \end{tabular}
    \label{tab:sample_efficiency_improve_vs_giril}
\end{table}

Additionally, we summarize the sample efficiency improvements of VAIL and ILDE methods versus GIRIL in Table \ref{tab:sample_efficiency_improve_vs_giril}. The sample efficiency of VAIL in outperforming experts is much worse than GIRIL and ILDE variants. ILDE shows impressive sample efficiency improvements across several games, and especially so in BattleZone, where the improvement over GIRIL is 90.06\%. 

\subsubsection{Experiments with IQ-Learn and HyPE}
\label{app:IQL-HyPE}
As additional baseline comparisons, we include IQ-Learn \citep{garg2021iq} and HyPE \citep{ren2024hybrid}. 

First, we compare IQ-Learn on our Atari task setting (with 10\% of a one-life demonstration). The results in Table~\ref{tab:IQ-Learn} show that IQ-Learn does not perform well on Atari tasks, likely because the demonstration data are extremely limited. We do not include HyPE in the Atari tasks because the HyPE implementation does not support Atari environments.

\begin{table}[!h]
    \centering
    \caption{Performance of IQ-Learn on Atari benchmarks.}
    \label{tab:IQ-Learn}
    \begin{tabular}{c|c}
    \toprule
   Atari Game	& IQ-Learn performance \\
   \midrule
   BeamRider &	0.0 $\pm$ 0.0 \\
DemonAttack 	&88.5 $\pm$ 
92 \\
BattleZone& 	3,800$\pm$ 
870 \\
Qbert 	&225$\pm$ 
125 \\
Krull 	&4,500$\pm$ 
2,030 \\
StarGunner& 	200$\pm$ 
100 \\
\bottomrule
    \end{tabular}
\end{table}

Second, we compare HyPE and HyPE-ILDE, a variant of our framework implemented with HyPE, on two MuJoCo tasks based on only a single demonstration (which is significantly more challenging than the 64 demonstrations as in \cite{ren2024hybrid}). The results in Table~\ref{tab:HyPE-ILDE} show that HyPE-ILDE clearly outperforms HyPE.

\begin{table}[!h]
    \centering
    \caption{Comparison of HyPE and HyPE-ILDE on MuJoCo tasks.}
    \label{tab:HyPE-ILDE}
    \begin{tabular}{c|cc}
\toprule
MuJoCo Tasks 	& HyPE & 	HyPE-ILDE \\
\midrule
Ant-v3 &	785.8 $\pm$
528.6 &	\textbf{813.8 $\pm$
833.0} \\
Hopper-v3 	& 2,995.0$\pm$
745.4 &	\textbf{3,372.6$\pm$
359.4} \\
\bottomrule
    \end{tabular}

\end{table}

\subsubsection{ILDE for noisy demonstrations}
\label{app:noisydemos}
To test the robustness of ILDE, we generate noisy demonstrations by injecting noise into the environment in the following manner: with probability $p_{\text{tremble}}$, a random action is executed instead of the one chosen by the policy \citep{ren2024hybrid}. Due to the limited time and computational resources, we test the robustness of ILDE for noisy demonstrations on BeamRider without tuning any hyperparameters. The results in Table~\ref{tab:noisydemos} show that ILDE consistently outperforms the expert (1,918,645) and GIRIL (8,5241,085) on the noisy demonstrations with noise rate of $p_{\text{tremble}} \in \{0.01, 0.05, 0.1, 0.3,0.5\}$, providing insights into ILDE’s robustness in real-world settings. 
\begin{table}[!h]
    \centering
        \caption{The effect of noise rate ($p_{\text{tremble}}$) on ILDE performance for noisy demonstrations on the BeamRider task.}
    \label{tab:noisydemos}
    \scalebox{0.93}{
    \begin{tabular}{cccccc}
    \toprule
    0.0  &	0.01 &	0.05 &  0.1 	& 0.3 	& 0.5 \\
    \midrule
    11,453 $\pm$	 2,319 &
    12,353 $\pm$	2,753 	&
    8,893 $\pm$	951 	&
    9,935 $\pm$	 1,952&
    9,728 $\pm$	1,847 	&
    10,735 $\pm$	1,198 \\
    \bottomrule
    \end{tabular}
    }
\end{table}

\subsubsection{Continuous control tasks}
\label{app:mujoco}
Except for the above evaluation on the Atari games with high-dimensional state space and discrete action space, we also evaluated our method on continuous control tasks where
the state space is low-dimensional and the action space is continuous. The continuous control tasks were from MuJoCo \citep{todorov2012mujoco}.

\textbf{Demonstrations.} We use the demonstrations from the open-source repository ``pytorch-a2c-ppo-acktr-gail'' \citep{pytorchrl}. We compare ILDE with VAIL and GIRIL on the continuous control tasks (i.e., Reacher, Hopper, Walker2d, and HumanoidStandup) with 4 trajectories in the demonstration for each task.

\textbf{Experimental setups.} Similar to the setups in Atari games, we need to pretrain a reward model for GIRIL and ILDE using the demonstrations. The training is conducted with the Adam optimizer at a learning rate of 3e-4 and a mini-batch size of 32 for 10,000 epochs. In each training epoch, we sample a mini-batch of data every 20 states. To evaluate the quality of our learned reward, we use the trained reward learning module to produce intrinsic rewards for policy data and trained a policy to maximize the inferred intrinsic rewards via PPO. For VAIL, we train the discriminator using the Adam optimizer with a learning rate of 3e-4. The discriminator is updated at every policy step. We train the PPO policy for all imitation learning methods for 10 million steps. We compare ILDE with baselines in the measurement of average return. The average return is measured using the true rewards in the environment. All the experiments are executed in an NVIDIA A100 GPU with 40 GB memory. 

\textbf{Network architectures.} We use 3-layer MLPs to implement the discriminator for VAI, and the encoder and the decoder for GIRIL and ILDE. For a fair comparison, we used an identical policy network for all methods. We used the actor-critic approach for training PPO policy for all the imitation learning methods. The number of hidden layers in each MLP is 100. Table \ref{tab:mujoco_policy_arch} shows the architecture details of the actor-critic network for MuJoCo tasks. Table \ref{tab:mujoco_mlp_arch} shows the MLP architectures, i.e., GIRIL's  encoder and decoder and VAIL's discriminator.

\begin{table}[!h]
    \centering
    \caption{Architectures of the actor-critic network for MuJoCo tasks. $\mathrm{dim}(\mathcal{S})$ and $\mathrm{dim}(\mathcal{A})$ represent the state dimension and action dimension, respectively. } 
    \begin{tabular}{c|c}
         \hline
         \multicolumn{2}{c}{Actor-Critic network} \\ \hline
         \multicolumn{2}{c}{$1 \times \mathrm{dim}(\mathcal{S})$ States}  \\ \hline
          \multicolumn{2}{c}{dense $\mathrm{dim}(\mathcal{S})$ $\rightarrow$ 100, Tanh} \\
          \multicolumn{2}{c}{dense 100 $\rightarrow$ 100, Tanh} \\
          \cmidrule{1-2}
          dense 100 $\rightarrow$ $\mathrm{dim}(\mathcal{A})$ & dense 100 $\rightarrow$ 1 \\
          Gaussian Distribution & \\ \midrule
          Actions & Values \\ \hline  
    \end{tabular}
    \label{tab:mujoco_policy_arch}
\end{table}

\begin{table}[!h]
    \centering
    \caption{Architectures of MLP-based GIRIL's \textit{encoder} and \textit{decoder}, and VAIL's discriminator. } 
    \scalebox{0.75}{
    \begin{tabular}{c|c|c}
    \toprule
        \multicolumn{2}{c|}{GIRIL} & VAIL \\ \midrule
        \textit{encoder} & \textit{decoder} & discriminator \\ \midrule
        $1 \times \mathrm{dim}(\mathcal{S})$ States and Next State & $1 \times \mathrm{dim}(\mathcal{A})$ Actions and $1 \times \mathrm{dim}(\mathcal{S})$ States & $1 \times \mathrm{dim}(\mathcal{A})$ Actions and $1 \times \mathrm{dim}(\mathcal{S})$ States \\ \midrule
       dense $\mathrm{dim}(\mathcal{S})\times2$ $\rightarrow$ 100, Tanh  & dense $\mathrm{dim}(\mathcal{S})$+$\mathrm{dim}(\mathcal{A})$ $\rightarrow$ 100, Tanh  & dense $\mathrm{dim}(\mathcal{S})$ $\rightarrow$ 100, Tanh  \\
       dense 100 $\rightarrow$ 100, Tanh  & dense 100 $\rightarrow$ 100, Tanh  & dense 100 $\rightarrow$ 100, Tanh  \\
       dense 100 $\rightarrow$ $\boldsymbol{\mu}$, dense 100 $\rightarrow$ $\boldsymbol{\sigma}$ & dense 100 $\rightarrow$ $\mathrm{dim}(\mathcal{S})$ & split 100 into 50, 50 \\
       reparameterization $\rightarrow$ $\mathrm{dim}(\mathcal{A})$ & & dense 50 $\rightarrow$ $\boldsymbol{\mu}_{E'}$, dense 50 $\rightarrow$ $\boldsymbol{\sigma}_{E'}$  \\
       & & reparameterization $\rightarrow$ 1 \\
       \midrule
      Latent variables & $\mathrm{dim}(\mathcal{S})$ Predicted next states & $\boldsymbol{\mu}_{E'}$, $\boldsymbol{\sigma}_{E'}$, discrimination \\ 
    \bottomrule
    \end{tabular}
    }
    \label{tab:mujoco_mlp_arch}
\end{table}

\textbf{Hyperparameter settings.} Table \ref{tab:experiment_params_mujoco} summarized the parameter settings for MuJoCo tasks. 
\begin{table}[!h]
    \centering
    \caption{Hyperparameter settings for MuJoCo tasks.}
    \label{tab:experiment_params_mujoco}
    \begin{tabular}{l|l}
    \toprule
    Parameter & Setting \\
    \midrule
    Actor-critic learning rate & $3\text{e-}4$, linear decay \\
    PPO clipping & $0.1$  \\
    Reward model learning rate & $3\text{e-}4$\\
    Discount factor & $0.99$ \\
    GAE Lambda & $0.95$ \\
    Critic loss coefficient & $0.5$  \\
    Entropy regularisation & $0.0$ \\
    Epochs per batch & $10$  \\
    Batch size & $32 \times  2048$ (parallel workers $\times$ time steps) \\
    Mini batch size & $32$ \\ 
    Evaluation frequency & every $6$ policy updates \\
    \midrule
    GIRIL objective $\alpha$ & $0.01$ \\
    Mini batch size for training the reward model & $32$ \\
    Total training epoch of reward model & $10,000$ \\ 
    \midrule
    VAIL's bottleneck loss weight $\beta$ & $1.0$ \\
    Information constraint $I_c$ & $0.2$ \\
    \midrule
    $\lambda$ in Algorithm \ref{alg:ILDE} & 10.0 \\
    \bottomrule
    \end{tabular}
\end{table}

\textbf{Results.} Figure \ref{fig:mujoco_curves_all} illustrates the curves of VAIL, GIRIL, and ILDE in the four continuous control tasks. The ablation study results for MuJoCo are illustrated in Figure \ref{fig:mujoco_ablations}.

\begin{figure}[!h]
    \centering
    \subfigure[Reacher.]
    {\includegraphics[width=0.44\textwidth]{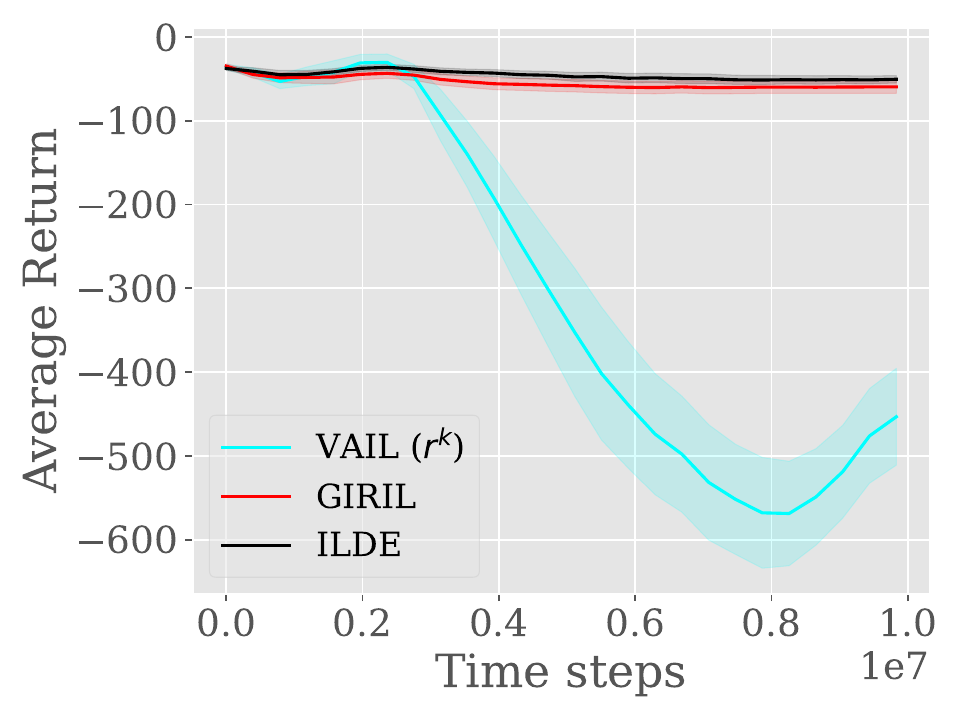}}
    \subfigure[Hopper.]
    {\includegraphics[width=0.44\textwidth]{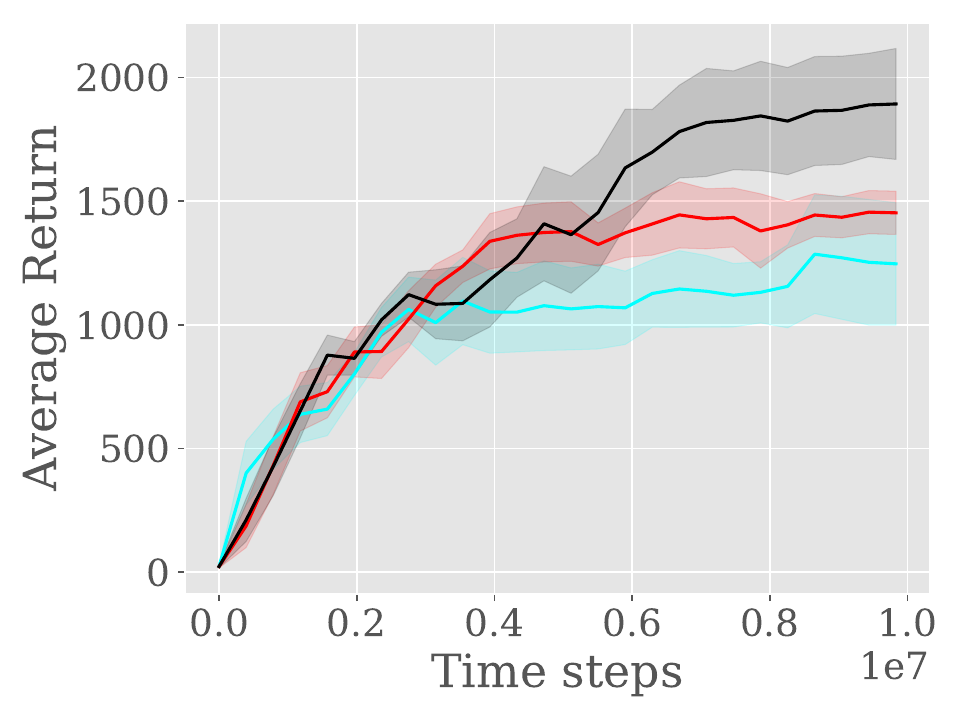}} \\
    \subfigure[Walker2d.]
    {\includegraphics[width=0.44\textwidth]{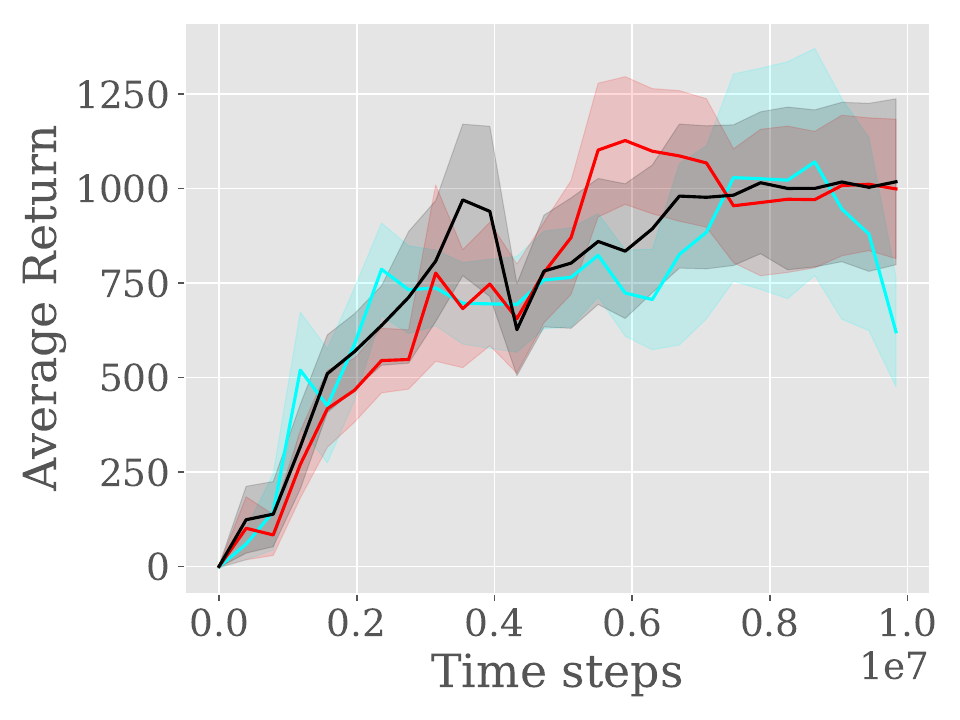}}
    \subfigure[HumanoidStandup.]
    {\includegraphics[width=0.44\textwidth]{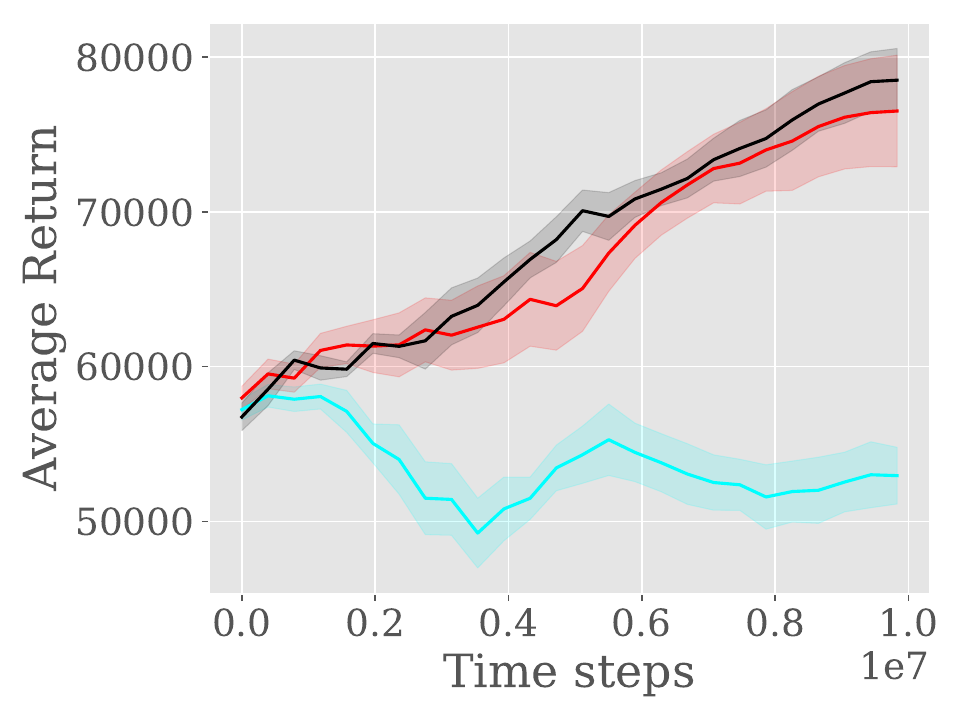}}
    \caption{Average return vs. the number of simulation steps on MuJoCo tasks. The solid lines show the mean performance over 10 random seeds. The shaded area represents the standard deviation from the mean. }
    \label{fig:mujoco_curves_all}
\end{figure}

\begin{figure}[!h]
    \centering
    \subfigure[Reacher.]
    {\includegraphics[width=0.44\textwidth]{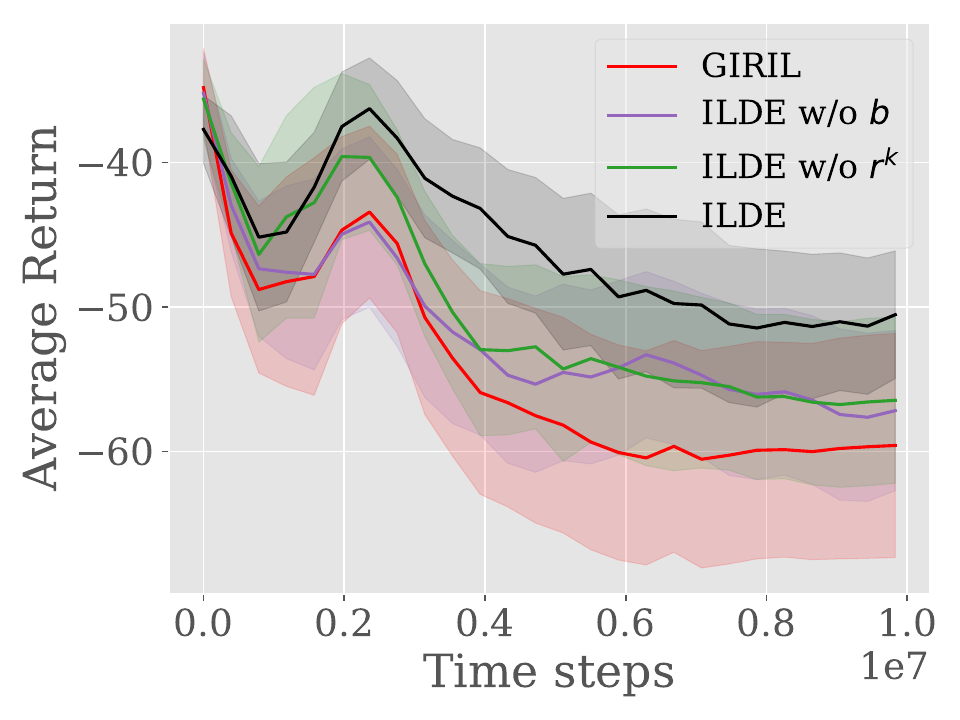}}
    \subfigure[Hopper.]
    {\includegraphics[width=0.44\textwidth]{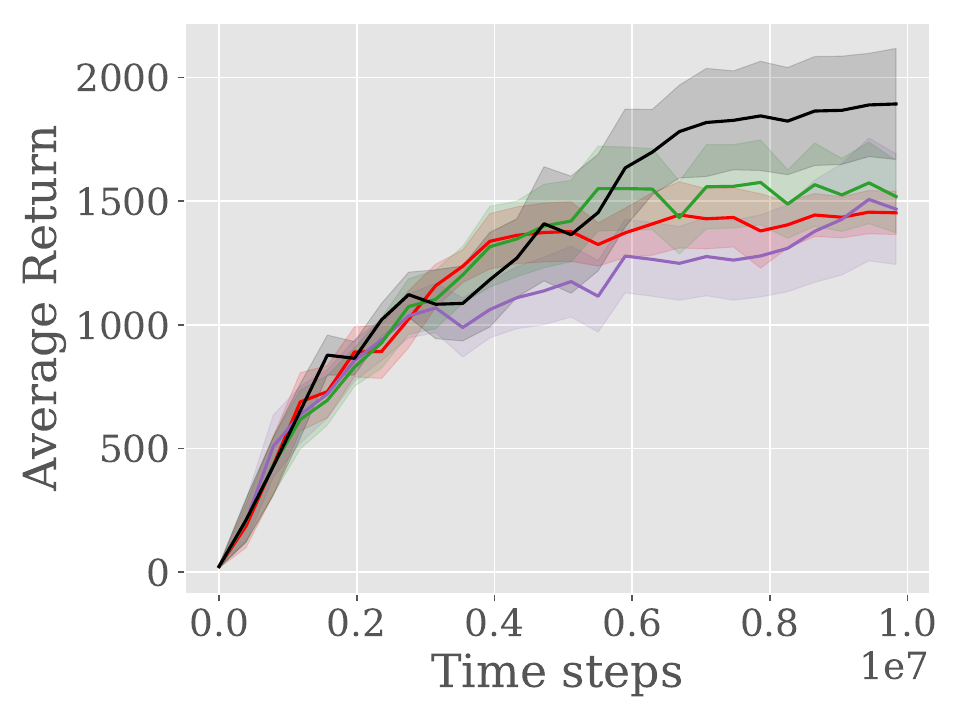}} \\
    \subfigure[Walker2d.]
    {\includegraphics[width=0.44\textwidth]{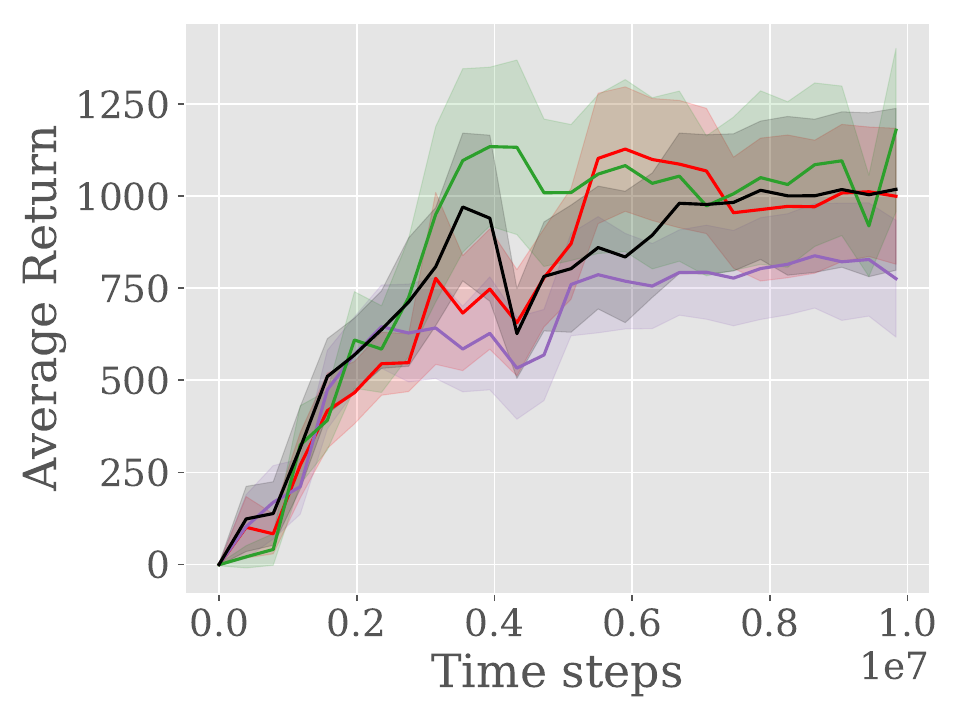}}
    \subfigure[HumanoidStandup.]
    {\includegraphics[width=0.44\textwidth]{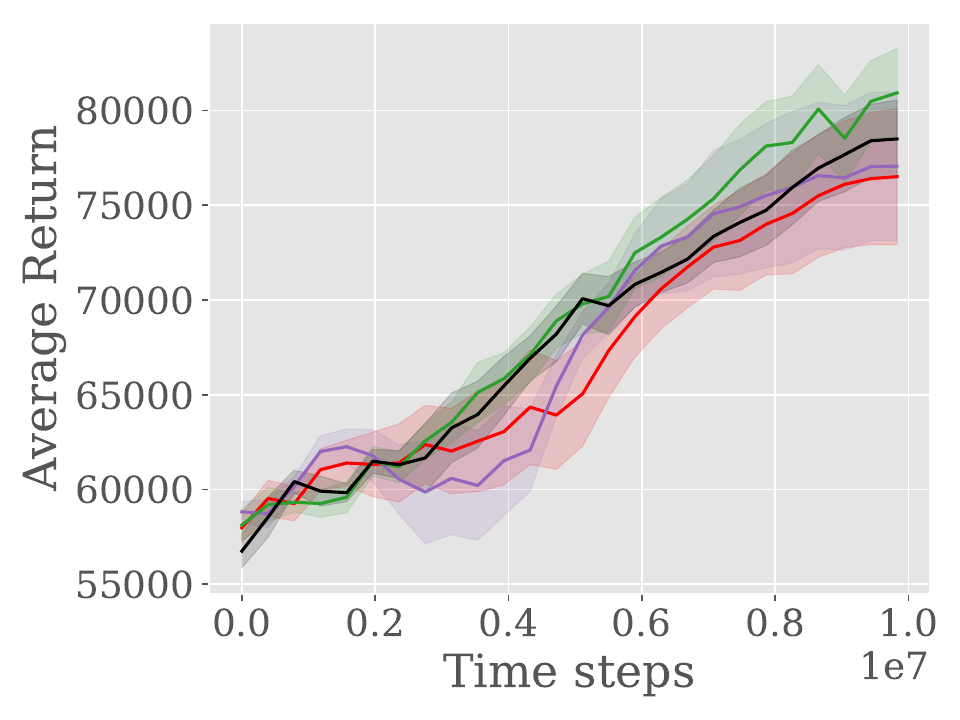}}
    \caption{Ablations on MuJoCo tasks. Average return vs. the number of simulation steps on MuJoCo tasks. The solid lines show the mean performance over 10 random seeds. The shaded area represents the standard deviation from the mean. }
    \label{fig:mujoco_ablations}
\end{figure}

\section{Proof of Theorem \ref{thm:regret}}

Our proof structure for Theorem \ref{thm:regret} largely adapts from the proofs in \citet{liu2024optimistic}. However, our analysis different from theirs in three aspects: (1) we consider imitation learning, where the reward is time-varying, (2) we consider MDPs with generalized Eluder dimension, which is slightly more general than Eluder dimension \citep{russo2013eluder} as shown in \citet{zhao2023nearly}, and (3) the analysis leads to a sublinear batch-regret, which is stronger than a sample complexity bound. 

First, we have the following extended value difference lemma. 

\begin{lemma}[Value difference lemma] \label{lemma:val_diff}
    For any policy $\pi$, the value functions $\{V_h^k\}_{h = 1}^H$ and $\{Q_h^k\}_{h = 1}^H$ returned by \ope$(\pi, \cD, \tdr)$ satisfy the following equation, \begin{align*} 
        V_{1, \pi}^\tdr(s_1) - V_1^k(s_1) &= \sum_{h = 1}^H \EE_{s_h \sim \pi}\big[\langle Q_h^k(s_h, \cdot), \pi_h(\cdot | s_h) - \pi_h^k(\cdot | s_h)\rangle\big] \\&\quad - \sum_{h = 1}^H \EE_{(s_h, a_h) \sim \pi}\big[ Q_h^k(s_h, a_h) - \tilde{r}(s_h, a_h) - \PP_h V_{h + 1}^k(s_h, a_h)\big].
    \end{align*}
\end{lemma}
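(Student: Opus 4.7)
The plan is to use the standard extended value difference decomposition, where the gap between the value of the target policy $\pi$ under the true Bellman operator with reward $\tilde r$ and the estimated value $V_1^k$ produced by the OPE subroutine is split into (a) a policy mismatch term picking up $\langle Q_h^k,\pi_h-\pi_h^k\rangle$ at the states visited by $\pi$, and (b) a Bellman-residual term measuring how far $Q_h^k$ deviates from $\tilde r+\PP_h V_{h+1}^k$. Because $Q_h^k$ is constructed in Algorithm~\ref{alg:tabpe} from regression plus an exploration bonus and a clip, it is not the true $Q$-function of any policy, so the residual term $Q_h^k(s,a)-\tilde r(s,a)-\PP_h V_{h+1}^k(s,a)$ does not vanish and must be tracked explicitly.

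Concretely, I would define $\delta_h(s):=V_{h,\pi}^{\tilde r}(s)-V_h^k(s)$ and $\xi_h(s,a):=Q_h^k(s,a)-\tilde r(s,a)-\PP_h V_{h+1}^k(s,a)$. Using the identity $V_{h,\pi}^{\tilde r}(s)=\langle Q_{h,\pi}^{\tilde r}(s,\cdot),\pi_h(\cdot\mid s)\rangle$ from \eqref{2.2} and $V_h^k(s)=\langle Q_h^k(s,\cdot),\pi_h^k(\cdot\mid s)\rangle$ from line 8 of Algorithm~\ref{alg:tabpe}, I would add and subtract $\langle Q_h^k(s,\cdot),\pi_h(\cdot\mid s)\rangle$ to obtain
\begin{align*}
\delta_h(s)=\bigl\langle Q_{h,\pi}^{\tilde r}(s,\cdot)-Q_h^k(s,\cdot),\pi_h(\cdot\mid s)\bigr\rangle+\bigl\langle Q_h^k(s,\cdot),\pi_h(\cdot\mid s)-\pi_h^k(\cdot\mid s)\bigr\rangle.
\end{align*}
The Bellman equation $Q_{h,\pi}^{\tilde r}=\tilde r+\PP_h V_{h+1,\pi}^{\tilde r}$ together with the definition of $\xi_h$ then gives $Q_{h,\pi}^{\tilde r}-Q_h^k=\PP_h\delta_{h+1}-\xi_h$.

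Taking expectation under $s_h\sim\pi$, the first bracket becomes $\EE_{s_{h+1}\sim\pi}[\delta_{h+1}(s_{h+1})]-\EE_{(s_h,a_h)\sim\pi}[\xi_h(s_h,a_h)]$, so the recursion
\begin{align*}
\EE_{s_h\sim\pi}[\delta_h(s_h)]=\EE_{s_{h+1}\sim\pi}[\delta_{h+1}(s_{h+1})]+\EE_{s_h\sim\pi}\bigl[\langle Q_h^k,\pi_h-\pi_h^k\rangle\bigr]-\EE_{(s_h,a_h)\sim\pi}[\xi_h]
\end{align*}
holds. Telescoping from $h=1$ to $H$ and using the terminal condition $V_{H+1,\pi}^{\tilde r}\equiv V_{H+1}^k\equiv 0$, i.e.\ $\delta_{H+1}\equiv 0$, yields the stated identity.

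I do not expect a genuine obstacle here: the argument is a variant of the well-known extended value difference lemma, and all that matters is to (i) carefully track that $Q_h^k$ is the output of OPE rather than a true $Q$-function, which is precisely what the residual $\xi_h$ encodes, and (ii) get the signs right in the add-and-subtract step so that the Bellman-residual term enters with a minus sign, as in the statement. No concentration, no function class, and no property of the bonus $b_h$ is used for this lemma; those enter only when one later bounds $\xi_h$ using Assumption~\ref{assumption:realizability} and the confidence lemma for $D_{\cF_h}$.
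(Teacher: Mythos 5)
Your proposal is correct and follows essentially the same route as the paper's proof: the same add-and-subtract of $\langle Q_h^k(s,\cdot),\pi_h(\cdot\mid s)\rangle$, the same use of the Bellman equation to rewrite $Q_{h,\pi}^{\tdr}-Q_h^k$ as a next-step value gap minus the Bellman residual, and the same expectation-then-telescope step with $\delta_{H+1}\equiv 0$. Your explicit $\delta_h$/$\xi_h$ notation is only a cosmetic repackaging of the paper's argument, and your sign bookkeeping matches the stated identity.
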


\begin{proof} 
    For any $h \in [H]$, $s \in \cS$, we have
    \begin{align} 
        V_{h, \pi}^{\tilde r} (s) - V_h^k(s) &= \langle Q_{h, \pi}^\tdr (s, \cdot), \pi_h(\cdot | s) \rangle - \langle Q_h^k(s, \cdot), \pi_h^k(\cdot | s) \rangle \notag 
        \\&= \langle Q_{h, \pi}^\tdr(s, \cdot) - Q_h^k(s, \cdot), \pi_h(\cdot | s) \rangle - \langle Q_h^k(s, \cdot), \pi_h^k(\cdot | s) - \pi_h(\cdot | s)\rangle \notag
        \\&= \langle \PP_h V_{h + 1, \pi}^\tdr(s, \cdot) - \PP_h V_{h + 1}^k(s, \cdot), \pi_h(\cdot | s) \rangle - \langle Q_h^k(s, \cdot), \pi_h^k(\cdot | s) - \pi_h(\cdot | s)\rangle \notag \\&\quad - \langle Q_h^k(s, \cdot) - \tilde{r}(s, \cdot) - \PP_h V_{h + 1}^k(s, \cdot), \pi_h(\cdot | s) \rangle,  \label{eq:val_diff_1}
    \end{align}
    where the first equality follows from the definition of the value functions $V_h^k$ and $Q_h^k$, the last equality follows from the Bellman equation for the state-action value function $Q_{h, \pi}$ in \eqref{2.2}. 
    From \eqref{eq:val_diff_1} we further obtain that for any $h \in [H]$, policy $\pi$, \begin{align} 
        &\EE_{s_h \sim \pi} \big[V_{h, \pi}^\tdr (s_h) - V_h^k(s_h)\big] - \EE_{s_h \sim \pi}\big[V_{h + 1, \pi}^\tdr (s_{h + 1}) - V_{h + 1}^k(s_{h + 1})\big] \notag \\&=  \EE_{s_h \sim \pi}\big[\langle Q_h^k(s_h, \cdot), \pi_h(\cdot | s_h) - \pi_h^k(\cdot | s_h)\rangle\big]  - \EE_{s_h \sim \pi}\big[\langle Q_h^k(s_h, \cdot) - \tilde{r}(s_h, \cdot) - \PP_h V_{h + 1}^k(s_h, \cdot), \pi_h(\cdot | s_h) \rangle\big].  \notag 
    \end{align}
    Taking the sum over $h \in [H]$ yields the following result, \begin{small} \begin{align*} 
        &V_{1, \pi}^\tdr(s_1) - V_1^k(s_1) \\&= \sum_{h = 1}^H \EE_{s_h \sim \pi}\big[\langle Q_h^k(s_h, \cdot), \pi_h(\cdot | s_h) - \pi_h^k(\cdot | s_h)\rangle\big] \\&\quad - \sum_{h = 1}^H \EE_{s_h \sim \pi}\big[\langle Q_h^k(s_h, \cdot) - \tilde{r}(s_h, \cdot) - \PP_h V_{h + 1}^k(s_h, \cdot), \pi_h(\cdot | s_h) \rangle\big]
        \\&= \sum_{h = 1}^H \EE_{s_h \sim \pi}\big[\langle Q_h^k(s_h, \cdot), \pi_h(\cdot | s_h) - \pi_h^k(\cdot | s_h)\rangle\big] - \sum_{h = 1}^H \EE_{(s_h, a_h) \sim \pi}\big[ Q_h^k(s_h, a_h) - \tilde{r}(s_h, a_h) - \PP_h V_{h + 1}^k(s_h, a_h)\big].
    \end{align*} \end{small}
\end{proof}

\begin{lemma}[Lemma 17, \citealt{shani2020optimistic}]
    For any policy $\pi$, $s \in \cS$, $h \in [H]$, we have \begin{align*} 
        \sum_{k = 1}^K \big[\langle Q_h^k(s, \cdot), \pi_h(\cdot | s) - \pi_h^k(\cdot | s)\rangle\big] \le \log |\cA|/\eta + \eta H^2 K / 2.
    \end{align*}
\end{lemma}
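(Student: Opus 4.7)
The plan is to prove this as the standard online-mirror-descent (exponential-weights) regret bound, using the KL divergence to the comparator policy $\pi_h(\cdot\mid s)$ as a potential function. Because the update rule in Line \ref{line:pu} is precisely a multiplicative-weights step $\pi^{k+1}_h(a\mid s)\propto\pi^k_h(a\mid s)\exp(\eta Q_h^k(s,a))$, and the clipped $Q_h^k(s,\cdot)$ from Algorithm \ref{alg:tabpe} is bounded in $[-H,H]$, this reduces to the usual Hedge/EG analysis applied pointwise at each $s$ and $h$. The state $s$ and step $h$ are fixed throughout, so I would treat $\{Q_h^k(s,\cdot)\}_{k\in[K]}$ as a sequence of loss vectors in $[-H,H]^{|\cA|}$ and $\{\pi^k_h(\cdot\mid s)\}_{k\in[K]}$ as the corresponding iterates.

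First I would write down the one-step potential change. Define $\Phi^k := \mathrm{KL}\big(\pi_h(\cdot\mid s)\,\|\,\pi^k_h(\cdot\mid s)\big)$ and let $Z^k(s) := \sum_{a}\pi^k_h(a\mid s)\exp(\eta Q_h^k(s,a))$ be the partition function. Expanding the log of the ratio $\pi^{k+1}_h/\pi^k_h$, a direct computation gives
\begin{align*}
\Phi^{k+1}-\Phi^k \;=\; -\eta\,\big\langle Q_h^k(s,\cdot),\,\pi_h(\cdot\mid s)\big\rangle \;+\;\log Z^k(s).
\end{align*}
Next I would bound the log-partition-function term. Using Hoeffding's lemma (or the Taylor estimate $\exp(\eta x)\le 1+\eta x+\tfrac{\eta^2 x^2}{2}\exp(\eta|x|)$ combined with $|\eta Q_h^k|\le \eta H$, which we may assume is $O(1)$) applied to the random variable $Q_h^k(s,A)$ with $A\sim\pi^k_h(\cdot\mid s)$ and range $2H$, I would obtain
\begin{align*}
\log Z^k(s)\;\le\;\eta\,\big\langle Q_h^k(s,\cdot),\,\pi^k_h(\cdot\mid s)\big\rangle \;+\;\frac{\eta^2 H^2}{2}.
\end{align*}
Combining these two displays yields the per-step inequality
\begin{align*}
\eta\,\big\langle Q_h^k(s,\cdot),\,\pi_h(\cdot\mid s)-\pi^k_h(\cdot\mid s)\big\rangle \;\le\; \Phi^k-\Phi^{k+1}+\frac{\eta^2 H^2}{2}.
\end{align*}

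Finally I would telescope from $k=1$ to $K$, use $\Phi^{K+1}\ge 0$, and invoke the uniform initialization $\pi^1_h(\cdot\mid s)=\mathrm{Uniform}(\cA)$ to get $\Phi^1=\mathrm{KL}(\pi_h(\cdot\mid s)\,\|\,\mathrm{Uniform}(\cA))\le\log|\cA|$. Dividing through by $\eta$ yields the claimed bound $\log|\cA|/\eta+\eta H^2 K/2$. The only nontrivial step is the log-partition bound; one must be a little careful that the Hoeffding-style constant is the right $H^2/2$ (a range-$2H$ variable gives subgaussian parameter $H$, so $\log\EE\exp(\eta(X-\EE X))\le \eta^2 H^2/2$), but this is a standard application and matches the statement exactly. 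All other steps are algebraic, and the bound holds uniformly in $(s,h)$ because the argument is entirely pointwise.
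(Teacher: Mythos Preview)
Your proposal is correct and is exactly the standard mirror-descent/exponential-weights argument for this kind of bound. The paper itself does not prove this lemma; it simply cites it as Lemma~17 of \citet{shani2020optimistic}, and your KL-potential telescoping with the Hoeffding log-partition bound is precisely the argument underlying that result. One small remark: the parenthetical ``which we may assume is $O(1)$'' is unnecessary, since Hoeffding's lemma applies for any $\eta$ and bounded $Q_h^k\in[-H,H]$, yielding $\log Z^k(s)\le \eta\langle Q_h^k(s,\cdot),\pi_h^k(\cdot\mid s)\rangle+\eta^2 H^2/2$ without any smallness condition on $\eta H$.
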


\begin{lemma} 
    \label{lemma:confidence}
    Suppose Assumption \ref{assumption:realizability} holds. Then with probability at least $1 - \delta$, for all $h \in [H]$, $s \in \cS$, $a \in \cA$, we have \begin{align*} 
        \big|\hat f_{k, h}(s, a) - \PP_h V_{k, h + 1}(s, a)\big| \le \sqrt{8 H^2 \log(H \cdot \cN_{\cF}(\epsilon_\cF) / \delta) + 4 \epsilon_\cF N + \gamma} \cdot D_{\cF_h}((s, a); \cD_h). 
    \end{align*}
\end{lemma}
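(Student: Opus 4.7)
The plan is to prove Lemma \ref{lemma:confidence} by combining a standard least-squares analysis with general function approximation and the definition of the $D_{\cF_h}$ distance. Fix an index $h$. By Assumption \ref{assumption:realizability} applied to $V_{k, h+1}$, there exists $f_h^\star \in \cF_h$ with $f_h^\star(s,a) = \PP_h V_{k,h+1}(s,a)$. The crucial structural point enabled by Line 1 of Algorithm \ref{alg:tabpe} (splitting $\cD$ into disjoint $\cD_h$) is that $V_{k,h+1}$ is constructed only from $\cD_{h+1}, \dots, \cD_H$ and is therefore independent of $\cD_h$. Hence, conditionally on the data used to construct $V_{k,h+1}$, the noise $\eta_t := V_{k,h+1}(s_{h+1}^t) - f_h^\star(s_h^t, a_h^t)$ is a martingale difference sequence bounded in $[-2H, 2H]$ with respect to the filtration generated by $\cD_h$.

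Next, I would carry out the classical in-sample error analysis. By the defining optimality of the least-squares minimizer $\hat f_{k, h}$, expanding $\sum_{t} (\hat f_{k,h}(z_t) - V_{k,h+1}(s_{h+1}^t))^2 \le \sum_t (f_h^\star(z_t) - V_{k,h+1}(s_{h+1}^t))^2$ and rearranging gives
\[ \sum_{t \in \cD_h} \bigl(\hat f_{k,h}(z_t) - f_h^\star(z_t)\bigr)^2 \le 2 \sum_{t \in \cD_h} \eta_t \bigl(\hat f_{k,h}(z_t) - f_h^\star(z_t)\bigr), \]
where $z_t = (s_h^t, a_h^t)$. To uniformize over $\hat f_{k,h} \in \cF_h$, I would pass to the $\epsilon_\cF$-cover $\cC(\cF_h, \epsilon_\cF)$ from Definition \ref{def:covering}. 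For each fixed $f' \in \cC(\cF_h, \epsilon_\cF)$, Azuma--Hoeffding on the bounded martingale yields a sub-Gaussian tail for $\sum_t \eta_t (f'(z_t) - f_h^\star(z_t))$; a union bound over the cover (of size $\cN_\cF(\epsilon_\cF)$) and over $h \in [H]$, together with the self-bounding inequality $2ab \le a^2/2 + 2b^2$ and an $\epsilon_\cF N$ term accounting for the discretization error on $\hat f_{k,h}$, produces
\[ \sum_{t \in \cD_h} \bigl(\hat f_{k,h}(z_t) - f_h^\star(z_t)\bigr)^2 \le 8 H^2 \log\bigl(H \cN_\cF(\epsilon_\cF)/\delta\bigr) + 4 \epsilon_\cF N \]
simultaneously for all $h \in [H]$, with probability at least $1 - \delta$.

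Finally, I would convert this in-sample $\ell_2$ bound into a pointwise bound via the generalized Eluder distance. Since $\hat f_{k,h}, f_h^\star \in \cF_h$, Definition \ref{def:ged} gives
\[ \bigl(\hat f_{k,h}(s,a) - f_h^\star(s,a)\bigr)^2 \le D_{\cF_h}^2\bigl((s,a); \cD_h\bigr) \cdot \Bigl(\sum_{t \in \cD_h} \bigl(\hat f_{k,h}(z_t) - f_h^\star(z_t)\bigr)^2 + \gamma\Bigr). \]
Substituting the previous concentration bound, taking square roots, and recalling $f_h^\star(s,a) = \PP_h V_{k,h+1}(s,a)$ delivers the claimed inequality.

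The main obstacle is the uniform concentration step, since $\cF_h$ is generally an infinite function class and $\hat f_{k,h}$ is chosen adaptively from $\cD_h$. The covering argument is standard but requires the self-bounding trick to handle the fact that the Azuma bound depends on $\sum_t (\hat f_{k,h}(z_t) - f_h^\star(z_t))^2$, which is itself the quantity being bounded; this is exactly where the constants $8 H^2$ and $4 \epsilon_\cF N$ emerge. Data splitting into disjoint $\cD_h$ conveniently eliminates what would otherwise be a more delicate coupling between $V_{k,h+1}$ and the regression targets.
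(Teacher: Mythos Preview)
Your proposal is correct and follows essentially the same approach as the paper: data splitting to decouple $V_{k,h+1}$ from $\cD_h$, least-squares optimality to reduce the in-sample squared error to a martingale cross term, a covering argument over $\cF_h$, and finally the definition of $D_{\cF_h}$ to pass from in-sample $\ell_2$ error to the pointwise bound. The only cosmetic difference is that the paper invokes the self-normalized scalar martingale bound (Lemma~\ref{lemma:hoeffding-variant}) directly with $\lambda$ chosen so that the coefficient on $\sum (f-\PP V)^2$ is $1/4$, whereas you phrase it as Azuma--Hoeffding followed by the self-bounding inequality; the resulting constants and argument are the same.
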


\begin{proof}
We have \begin{small}\begin{align*} 
    &\sum_{(s_h, a_h) \in \cD_h^k} \big(\hat f_{k, h}(s_h, a_h) - \PP_h V_{k, h + 1}(s_h, a_h)\big)^2 \\&\quad + 2 \sum_{(s_h, a_h) \in \cD_h^k} \big(\hat f_{k, h}(s_h, a_h) - \PP_h V_{k, h + 1}(s_h, a_h)\big) \cdot \big(V_{k, h + 1}(s_{h + 1}) - \PP_h V_{k, h + 1}(s_h, a_h)\big) \\&= \sum_{(s_h, a_h) \in \cD_h^k} \big(\hat f_{k, h}(s_h, a_h) - V_{k, h + 1}(s_{h + 1})\big)^2 - \sum_{(s_h, a_h) \in \cD_h^k} \big(V_{k, h + 1}(s_{h + 1}) - \PP_h V_{k, h + 1}(s_h, a_h)\big)^2 \le 0, 
\end{align*} \end{small}
where the last inequality follows from the definition of $\hat f_{k, h}$ in Algorithm 2. 

Then it follows that for all $h \in [H]$, \begin{small} \begin{align} 
    &\sum_{(s_h, a_h) \in \cD_h^k} \big(\hat f_{k, h}(s_h, a_h) - \PP_h V_{k, h + 1}(s_h, a_h)\big)^2 \notag \\&\quad \le 2 \sum_{(s_h, a_h) \in \cD_h^k} \big(\hat f_{k, h}(s_h, a_h) - \PP_h V_{k, h + 1}(s_h, a_h)\big) \cdot \big(\PP_h V_{k, h + 1}(s_h, a_h) - V_{k, h + 1}(s_{h + 1})\big). \label{eq:confidence:1}
\end{align}\end{small}

Note that $V_{k, h + 1}$ is a function only depends on data in $\cD_{h + 1}, \cD_{h + 2}, \cdots, \cD_H$. Hence the right-hand side of the above inequality is a martingale difference sequence. 

Consider a function $f \in \cC(\cF_h, \epsilon_\cF) \subseteq \cF_h$. By Lemma \ref{lemma:hoeffding-variant}, we have with probability at least $1 - \delta / (H \cdot \cN_{\cF}(\epsilon_\cF))$, \begin{align} 
    &\sum_{(s_h, a_h) \in \cD_h^k} \big(f(s_h, a_h) - \PP_h V_{k, h + 1}(s_h, a_h)\big) \cdot \big(\PP_h V_{k, h + 1}(s_h, a_h) - V_{k, h + 1}(s_{h + 1})\big) \notag\\&\quad \le \frac{1}{4} \sum_{(s_h, a_h) \in \cD_h^k} \big(f(s_h, a_h) - \PP_h V_{k, h + 1}(s_h, a_h)\big)^2 + 2H^2 \log(H \cdot \cN_{\cF}(\epsilon_\cF) / \delta). \label{eq:confidence:2}
\end{align}

Combining \eqref{eq:confidence:1} and \eqref{eq:confidence:2} and from the definition of $\epsilon_\cF$-net $\cC(\cF_h, \epsilon_\cF)$, we have with probability at least $1 - \delta / H$, \begin{align*} 
    \sum_{(s_h, a_h) \in \cD_h^k} \big(\hat f_{k, h}(s_h, a_h) - \PP_h V_{k, h + 1}(s_h, a_h)\big)^2 \le 8 H^2 \log(H \cdot \cN_{\cF}(\epsilon_\cF) / \delta) + 4 \epsilon_\cF H \cdot N / H.
\end{align*}

By the definition of $D_{\cF_h}^2$, we have with probability at least $1 - \delta / H$, \begin{align*} 
    \big|\hat f_{k, h}(s, a) - \PP_h V_{k, h + 1}(s, a)\big| \le \sqrt{8 H^2 \log(H \cdot \cN_{\cF}(\epsilon_\cF) / \delta) + 4 \epsilon_\cF N + \gamma} \cdot D_{\cF_h}((s, a); \cD_h),
\end{align*}
from which we can complete the proof by applying the union bound over $h \in [H]$. 
\end{proof}

\begin{lemma}
    For any policy $\pi$, suppose we sample $N$ trajectories $\cD = \{s_h^{(i)}, a_h^{(i)}\}_{h \in [H], i \in [N]}$. Then, with probability at least $1 - \delta$, we have \begin{align*} 
        \EE_{z_h \sim \pi} \big[D_{\cF_h}(z_h; \cD)\big] = O\left(\sqrt{\frac{1}{N}\cdot \frac{H^2 + \gamma}{\gamma} \dim_N(\cF_h) \log(1 / \delta)}\right). 
    \end{align*}
\end{lemma}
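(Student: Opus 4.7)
The plan is to reduce the claim to the classical potential/pigeonhole argument for the generalized Eluder dimension in two stages: first I replace the expectation over a fresh sample by a statement about the sequence of samples actually drawn into $\cD$, and then I apply the definition of $\dim_N(\cF_h)$.

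First, by Jensen's inequality, $\EE_{z\sim\pi}[D_{\cF_h}(z;\cD)] \le \sqrt{\EE_{z\sim\pi}[D_{\cF_h}^2(z;\cD)]}$, so it suffices to show an $O\!\big((H^2+\gamma)\dim_N(\cF_h)\log(1/\delta)/(N\gamma)\big)$ bound on $\EE_{z\sim\pi}[D_{\cF_h}^2(z;\cD)]$. The key structural fact I would use is that $D_{\cF_h}^2(z;S)$ is monotone non-increasing in the conditioning set $S$, since adding samples only enlarges the denominator in the defining supremum. Writing $\cD_{[i]} = \{z_1,\ldots,z_i\}$, $Y_i := \EE_{z\sim\pi}[D_{\cF_h}^2(z;\cD_{[i-1]})]$, and $X_i := D_{\cF_h}^2(z_i;\cD_{[i-1]})$, monotonicity yields $\EE_{z\sim\pi}[D_{\cF_h}^2(z;\cD)] \le Y_N \le \tfrac{1}{N}\sum_{i=1}^N Y_i$.

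Next, I observe that $\EE[X_i \mid z_1,\ldots,z_{i-1}] = Y_i$, and since every $f\in\cF_h$ takes values in $[-H,H]$ while the denominator of $D_{\cF_h}^2$ is at least $\gamma$, we have $X_i \in [0, 4H^2/\gamma]$ with conditional variance at most $(4H^2/\gamma)Y_i$. I would then apply Freedman's inequality to the supermartingale $\sum_i (Y_i - X_i)$: the variance proxy $(4H^2/\gamma)\sum_i Y_i$, combined with the AM-GM step $\sqrt{(H^2/\gamma)(\sum_i Y_i)\log(1/\delta)} \le \tfrac{1}{4}\sum_i Y_i + O\!\big((H^2/\gamma)\log(1/\delta)\big)$, lets me conclude $\sum_i Y_i \le 2\sum_i X_i + O\!\big((H^2/\gamma)\log(1/\delta)\big)$ with probability at least $1-\delta$.

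To close the loop I would invoke the definition of the generalized Eluder dimension, $\sum_i \min(1, X_i) \le \dim_N(\cF_h)$, together with the elementary inequality $X_i \le \tfrac{4H^2+\gamma}{\gamma}\min(1, X_i)$ (which holds because $X_i \le 4H^2/\gamma$). This yields $\sum_i X_i \le \tfrac{4H^2+\gamma}{\gamma}\dim_N(\cF_h)$; chaining with the Freedman step, dividing by $N$, and taking $\sqrt{\cdot}$ via $\sqrt{a+b}\le\sqrt{a}+\sqrt{b}$ produces the stated rate. I expect the Bernstein/Freedman step to be the main technical obstacle: a naive Azuma--Hoeffding bound gives only $\sum_i(Y_i - X_i) = O\!\big((H^2/\gamma)\sqrt{N\log(1/\delta)}\big)$, which after division by $N$ and Jensen would yield an $N^{-1/4}$ rate rather than the $N^{-1/2}$ rate claimed, so exploiting the variance proxy $\mathrm{Var}(X_i\mid z_{1:i-1}) \le (4H^2/\gamma)Y_i$ is essential.
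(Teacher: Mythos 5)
Your argument is essentially correct, but it takes a genuinely different (and heavier) route than the paper. You square first (Jensen), bound $\EE_{z\sim\pi}[D^2_{\cF_h}(z;\cD)]$ via the monotonicity of $D^2_{\cF_h}$ in the conditioning set, the averaging $Y_N\le\frac1N\sum_i Y_i$, a Freedman/Bernstein step with the self-bounding variance proxy $\mathrm{Var}(X_i\mid z_{1:i-1})\le (4H^2/\gamma)Y_i$, and finally the pigeonhole bound $\sum_i X_i\le\frac{4H^2+\gamma}{\gamma}\dim_N(\cF_h)$. The paper instead never squares: it uses the same pigeonhole bound on $\sum_i D^2_{\cF_h}(z_h^{(i)};z_h^{[i-1]})$, converts it to a bound on the \emph{unsquared} sum $\sum_i D_{\cF_h}(z_h^{(i)};z_h^{[i-1]})\le\sqrt{N(H^2/\gamma+1)\dim_N(\cF_h)}$ by Cauchy--Schwarz, and then applies plain Azuma--Hoeffding to the first-power increments (bounded by $O(H/\sqrt{\gamma})$), together with monotonicity $D_{\cF_h}(z;z^{[i-1]})\ge D_{\cF_h}(z;\cD)$, to get $N\,\EE_{z\sim\pi}[D_{\cF_h}(z;\cD)]\le \sqrt{N(H^2/\gamma+1)\dim_N(\cF_h)}+O\big(\tfrac{H}{\sqrt\gamma}\sqrt{N\log(1/\delta)}\big)$; dividing by $N$ already gives the $N^{-1/2}$ rate. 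So your closing claim that Azuma--Hoeffding is insufficient and the variance-adaptive (Freedman) step is ``essential'' is mistaken: it is only needed because you chose to concentrate the squared quantities; ordering the steps as Cauchy--Schwarz first, then Azuma on the unsquared terms, avoids the $N^{-1/4}$ degradation entirely. If you do keep your route, note one technical point you partially anticipated: vanilla Freedman is stated for a deterministic variance bound, so with the random (predictable) proxy $\sum_i Y_i$ you need either a peeling argument or a fixed-$\lambda$ Bernstein-type self-normalized bound of the form $\sum_i(Y_i-X_i)\le \lambda\sum_i\mathrm{Var}_i+\log(1/\delta)/\lambda$ with $\lambda\asymp\gamma/H^2$ (in the spirit of the paper's self-normalized martingale lemma), after which your absorption step $\frac14\sum_i Y_i$ goes through and the rest of your chain is sound.
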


\begin{proof} 
    It follows from the definition of $\dim_N(\cF_h)$ that \begin{align} 
        \dim_N(\cF_h) \ge \sum_{i = 1}^N \min\left(1, D_{\cF_h}^2(z_h^{(i)}; z_h^{[i - 1]})\right). 
    \end{align}
    From Definition \ref{def:ged}, we further have the following upper bound for cumulative uncertainty, \begin{align} 
        \sum_{i = 1}^N D_{\cF_h}^2(z_h^{(i)}; z_h^{[i - 1]}) &\le \frac{H^2}{\gamma}\sum_{i = 1}^N \mathds{1}\bigl(D_{\cF_h}^2(z_h^{(i)}; z_h^{[i - 1]}) > 1\bigr) + \sum_{i = 1}^N \min\left(1, D_{\cF_h}^2(z_h^{(i)}; z_h^{[i - 1]})\right) \notag
        \\&\le \big(\frac{H^2}{\gamma} + 1\big) \cdot \dim_N(\cF_h).
    \end{align}
    Thus, \begin{align} 
        \sum_{i = 1}^N D_{\cF_h}(z_h^{(i)}; z_h^{[i - 1]}) \le \sqrt{N \cdot \big(\frac{H^2}{\gamma} + 1\big) \cdot \dim_N(\cF_h)} \label{eq:uncertainty:1}
    \end{align}
    by AM-GM inequality.

    On the other hand, from Azuma-Hoeffding inequality (Lemma \ref{lemma:azuma}), we have with probability at least $1 - \delta$, \begin{align} 
        \sum_{i = 1}^N D_{\cF_h}(z_h^{(i)}; z_h^{[i - 1]}) &\ge \sum_{i = 1}^N \EE_{z_h \sim \pi}\big[D_{\cF_h}(z_h; z_h^{[i - 1]})\big] - O\left(\frac{H}{\sqrt{\gamma}} \sqrt{N \log(1 / \delta)}\right) \notag
        \\&\ge N \cdot \EE_{z_h \sim \pi}\big[D_{\cF_h}(z_h; \cD)\big] - O\left(\frac{H}{\sqrt{\gamma}} \sqrt{N \log(1 / \delta)}\right) \label{eq:uncertainty:2}
    \end{align}
    Substituting \eqref{eq:uncertainty:1} into \eqref{eq:uncertainty:2} yields: \begin{align} 
        \EE_{z_h \sim \pi}\big[D_{\cF_h}(z_h; \cD)\big] = O\left(\sqrt{\frac{1}{N}\cdot \frac{H^2 + \gamma}{\gamma} \dim_N(\cF_h) \log(1 / \delta)}\right). 
    \end{align}
\end{proof}

\begin{lemma}[Lemma 3, \citealt{liu2024optimistic}]\label{lemma:pi-close}
    Let $t_k$ be the index of the last policy which is used to collect fresh data at iteration $k$.  
    Suppose we choose $\eta$ and $m$ such that $\eta m \le 1/H^2$, then for any $k\in\mathbb{N^+}$ and any function $g:\cS\times\cA\rightarrow \RR^+$:
    $$
    \EE_{\pi^k}[g(s_h,a_h)] = \Theta\left(
    \EE_{\pi^{t_k}}[g(s_h,a_h)]\right).
    $$
\end{lemma}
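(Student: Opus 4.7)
\textbf{Proof plan for Lemma \ref{lemma:pi-close}.}

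The plan is to control the pointwise ratio $\pi^k_h(a\mid s)/\pi^{t_k}_h(a\mid s)$ for every $(s,a,h)$, then lift this to a ratio bound on the step-$h$ occupancy measures, from which the $\Theta(\cdot)$ relation follows. First, I would unroll the mirror descent update on line \ref{line:pu} of Algorithm~\ref{alg:oppo}. Writing $Z_h^j(s):=\sum_{a'}\pi^j_h(a'\mid s)\exp(\eta\, Q_h^j(s,a'))$ for the normalizer at iteration $j$, iterating the multiplicative update from $t_k$ up to $k$ gives
\begin{equation*}
\frac{\pi^k_h(a\mid s)}{\pi^{t_k}_h(a\mid s)} \;=\; \exp\!\Bigl(\eta \sum_{j=t_k}^{k-1} Q_h^j(s,a)\Bigr)\cdot \prod_{j=t_k}^{k-1} \frac{1}{Z_h^j(s)}.
\end{equation*}
Because Algorithm~\ref{alg:tabpe} clips $Q_h^j$ to $[-H,H]$, both $|\eta\sum_{j}Q_h^j(s,a)|\le \eta(k-t_k)H\le \eta m H$ and $Z_h^j(s)\in[e^{-\eta H},e^{\eta H}]$ hold, so the per-step ratio satisfies
\begin{equation*}
e^{-2\eta m H} \;\le\; \frac{\pi^k_h(a\mid s)}{\pi^{t_k}_h(a\mid s)} \;\le\; e^{2\eta m H}.
\end{equation*}

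Next, I would pass from per-step to marginal occupancies. For any trajectory $\tau=(s_1,a_1,\ldots,s_H,a_H)$, the Radon--Nikodym derivative of the trajectory law under $\pi^k$ against that under $\pi^{t_k}$ is $\prod_{h'=1}^H \pi^k_{h'}(a_{h'}\mid s_{h'})/\pi^{t_k}_{h'}(a_{h'}\mid s_{h'})$, since the transition kernel $\PP_h$ and the initial state distribution are shared. The previous per-step bound therefore yields a pointwise trajectory ratio in $[e^{-2\eta m H^2},\,e^{2\eta m H^2}]$. Marginalizing over paths up to step $h$ is a nonnegative-weighted average of these pointwise ratios, so the same two-sided bound holds for the step-$h$ state--action occupancy:
\begin{equation*}
e^{-2\eta m H^2} \;\le\; \frac{\PP^{\pi^k}(s_h=s,\,a_h=a)}{\PP^{\pi^{t_k}}(s_h=s,\,a_h=a)} \;\le\; e^{2\eta m H^2}.
\end{equation*}

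Finally, I would invoke the hypothesis $\eta m\le 1/H^2$ to collapse the exponent: $2\eta m H^2\le 2$, so the occupancy ratios lie in $[e^{-2},e^2]$. Because $g\ge 0$, integrating against the occupancy gives
\begin{equation*}
e^{-2}\,\EE_{\pi^{t_k}}[g(s_h,a_h)] \;\le\; \EE_{\pi^k}[g(s_h,a_h)] \;\le\; e^2\,\EE_{\pi^{t_k}}[g(s_h,a_h)],
\end{equation*}
which is precisely the $\Theta$ conclusion.

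The one delicate bookkeeping point -- and the main thing to be careful about -- is handling the normalizers $Z_h^j(s)$ in the unrolled update, since they contribute their own factor of $e^{\pm \eta(k-t_k)H}$ that must be absorbed into the per-step ratio (this is why the exponent is $2\eta m H$ rather than $\eta m H$). After that, lifting from per-step to occupancy ratios via a weighted-average argument and applying the step-size condition $\eta m\le 1/H^2$ is routine, and nonnegativity of $g$ is essential so the resulting inequalities can be integrated without sign issues.
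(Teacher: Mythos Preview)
Your proof is correct and follows the natural route. The paper itself does not supply a proof of this lemma: it is quoted directly from \citet{liu2024optimistic} (Lemma~3 there) and stated without argument. The closest the paper comes to your computation is the auxiliary Lemma~\ref{lemma:policy-stability} (Lemma~9 of the same reference), which packages your trajectory--ratio step as a black box: a single multiplicative update by $\exp(L_h)$ with $\|L_h\|_\infty\le 1/H$ changes the trajectory law by at most an absolute constant. Your argument is essentially an inline proof of that statement applied with $L_h=\eta\sum_{j=t_k}^{k-1}Q_h^j$, whose sup--norm is at most $\eta(k-t_k)H\le 1/H$ under the hypothesis, followed by the marginalization and the integration against $g\ge0$ that you describe.

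One small bookkeeping point: the bound $k-t_k\le m$ is not quite right for Algorithm~\ref{alg:oppo} as written, because at each refresh the sampling index $k'$ is drawn from $\{\max(1,k-m+1),\ldots,k\}$, so $t_k$ may lag the current iteration by as much as $2m-2$. This only changes the absolute constants (e.g.\ $e^{\pm2}$ becomes $e^{\pm4}$) and does not affect the $\Theta(\cdot)$ conclusion.
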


\begin{lemma}[Bounding batched regret ($I_1 + I_3$)] \label{lemma:I1}
Let $\tdr_k$ be defined in \eqref{eq:def:tdr} and update the policy as in Algorithm \ref{alg:oppo}. Suppose that Assumption \ref{assumption:realizability} holds. Then with probability at least $1 - 2 \delta$, \begin{align*} 
&\sum_{k = 1}^K \big[J(\pi^*, \tdr^k) - J(\pi, \tdr^k)\big] \le  H \log |\cA| / \eta + \eta H^3 K / 2 \\&\quad+ 2HK \cdot \sqrt{8 H^2 \log(H \cdot \cN_{\cF}(\epsilon_\cF) / \delta)  + 4 \epsilon_\cF N + \gamma} \cdot O\left(\sqrt{\frac{H}{N}\cdot \frac{H^2 + \gamma}{\gamma} \dim_N(\cF_h) \log(1 / \delta)}\right). 
\end{align*}
\end{lemma}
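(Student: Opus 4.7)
The plan is to decompose the batched regret $\sum_{k=1}^K [J(\pi^*, \tdr^k) - J(\pi^k, \tdr^k)]$ by sandwiching each term with the optimistic value $V_1^k(s_1)$ returned by \ope. First, I apply Lemma~\ref{lemma:val_diff} twice, once with $\pi=\pi^*$ and once with $\pi=\pi^k$, to write
\[
J(\pi^*,\tdr^k) - J(\pi^k,\tdr^k) \;=\; \bigl[V_{1,\pi^*}^{\tdr^k}(s_1) - V_1^k(s_1)\bigr] \;+\; \bigl[V_1^k(s_1) - V_{1,\pi^k}^{\tdr^k}(s_1)\bigr].
\]
Using Lemma~\ref{lemma:confidence}, the bonus definition \eqref{eq:def:bonus}, and the $[-H,H]$ clipping in Algorithm~\ref{alg:tabpe}, a backward induction on $h$ shows that on an event of probability at least $1-\delta$ the OPE estimate is \emph{optimistic}: the Bellman-residual term $Q_h^k(s,a) - \tdr^k(s,a) - (\PP_h V_{h+1}^k)(s,a)$ lies in $[0, 2b_h(s,a)]$ for every $(s,a,h)$. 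Plugging into Lemma~\ref{lemma:val_diff}, the first bracket is upper bounded by $\sum_h \EE_{s_h\sim\pi^*}[\langle Q_h^k(s_h,\cdot), \pi_h^*(\cdot|s_h)-\pi_h^k(\cdot|s_h)\rangle]$ (the nonnegative residual gets dropped), and the second bracket is upper bounded by $2\sum_h \EE_{(s_h,a_h)\sim\pi^k}[b_h(s_h,a_h)]$.

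Next, I handle the mirror-descent term. Summing over $k\in[K]$ and invoking the regret bound of \citet{shani2020optimistic} quoted immediately after Lemma~\ref{lemma:val_diff}, for every $s\in\cS$ and $h\in[H]$,
\[
\sum_{k=1}^K \langle Q_h^k(s,\cdot),\pi_h^*(\cdot|s)-\pi_h^k(\cdot|s)\rangle \le \log|\cA|/\eta + \eta H^2 K/2.
\]
Taking expectations under the state distribution of $\pi^*$ and summing over $h$ yields the first two terms $H\log|\cA|/\eta + \eta H^3 K/2$ of the claimed bound.

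For the bonus term, the data set $\cD^k$ used in \ope\ is collected by some policy $\pi^{t_k}$ within the same batch, while the relevant expectation is under $\pi^k$. Under the scheduling constraint $\eta m \le 1/H^2$ enforced by the chosen parameters, Lemma~\ref{lemma:pi-close} lets me replace $\EE_{\pi^k}[\cdot]$ by $\Theta(1)\cdot\EE_{\pi^{t_k}}[\cdot]$ up to absolute constants. The cumulative-uncertainty lemma just before Lemma~\ref{lemma:pi-close}, applied at sample size $N$, then gives
\[
\EE_{(s_h,a_h)\sim\pi^{t_k}}\bigl[D_{\cF_h}((s_h,a_h);\cD_h^k)\bigr] = O\!\left(\sqrt{\tfrac{1}{N}\cdot\tfrac{H^2+\gamma}{\gamma}\dim_N(\cF_h)\log(1/\delta)}\right).
\]
Multiplying by the pre-factor $\sqrt{8H^2\log(H\cN_\cF(\epsilon_\cF)/\delta)+4\epsilon_\cF N+\gamma}$ from \eqref{eq:def:bonus}, summing the factor $2$ and $H$ horizon steps across all $K$ iterations, and folding $\dim_N(\cF_h)$ into $\dim_N(\cF)$ produces exactly the third term of the bound, with the prefactor $2HK$ out front.

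The main obstacle is establishing the pointwise optimism $0\le Q_h^k - \tdr^k - \PP_h V_{h+1}^k \le 2b_h$, since the regression target $V_{h+1}^k$ inside Algorithm~\ref{alg:tabpe} itself depends on the bonus and clipping at later horizons. This requires a backward induction that combines Assumption~\ref{assumption:realizability} (so that $\PP_h V_{h+1}^k \in \cF_h$), Lemma~\ref{lemma:confidence} (to control $\hat f_h - \PP_h V_{h+1}^k$), and the clipping to $[-H,H]$ (to keep $V_{h+1}^k$ in the bounded range assumed by the covering number $\cN_\cF(\epsilon_\cF)$). The probability budget $2\delta$ in the statement is consistent with taking a union over the confidence event of Lemma~\ref{lemma:confidence} and the Azuma–Hoeffding event used inside the cumulative-uncertainty bound; the union over $k\in[K]$ is absorbed into the covering-number term.
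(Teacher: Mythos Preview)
Your proposal is correct and follows essentially the same route as the paper: decompose each summand by sandwiching with $V_1^k(s_1)$, apply the value-difference lemma (Lemma~\ref{lemma:val_diff}) to both pieces, use optimism from Lemma~\ref{lemma:confidence} to drop the $\pi^*$-residual and bound the $\pi^k$-residual by $2b_h$, control the mirror-descent term via the Shani et~al.\ lemma, and finish with Lemma~\ref{lemma:pi-close} plus the cumulative-uncertainty lemma. Your write-up is in fact more explicit than the paper's (which compresses the optimism step into ``applying Lemma~\ref{lemma:confidence}''); the one minor slip is that the uncertainty lemma should be applied at sample size $N/H$ (since $\cD^k$ is split evenly into $H$ disjoint parts in Algorithm~\ref{alg:tabpe}), which is what produces the $\sqrt{H/N}$ rather than $\sqrt{1/N}$ inside the $O(\cdot)$.
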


\begin{proof} 
    Suppose that the high-probability events in Lemma \ref{lemma:confidence} and Lemma \ref{lemma:pi-close} hold. 
    \begin{small}
    \begin{align*} 
        \sum_{k = 1}^K \big[J(\pi^*, \tdr^k) - J(\pi, \tdr^k)\big] &= \sum_{k = 1}^K \Bigl(V_{1, \pi^*}^{\tdr^k}(s_1) - V_1^k(s_1)\Bigr) + \sum_{k = 1}^K \Bigl(V_1^k(s_1) - V_{1, \pi^k}^{\tdr^k}(s_1)\Bigr)
        \\&\le \sum_{k = 1}^K \sum_{h = 1}^H \EE_{s_h \sim \pi^*}\big[\langle Q_h^k(s_h, \cdot), \pi_h(\cdot | s_h) - \pi_h^k(\cdot | s_h)\rangle\big] \\&\quad + \sum_{k = 1}^K \sum_{h = 1}^H \EE_{(s_h, a_h) \sim \pi^k}\big[Q_h^k(s_h, a_h) - \tilde{r}(s_h, a_h) - \PP_h V_{h + 1}^k(s_h, a_h)\big]
        \\&\le H \log |\cA| / \eta + \eta H^3 K / 2 + 2HK \cdot \sqrt{8 H^2 \log(H \cdot \cN_{\cF}(\epsilon_\cF) / \delta)  + 4 \epsilon_\cF N + \gamma} \\&\quad\cdot O\left(\sqrt{\frac{H}{N}\cdot \frac{H^2 + \gamma}{\gamma} \dim_N(\cF) \log(1 / \delta)}\right)
    \end{align*}\end{small}
    where the first inequality is obtained by applying the value difference lemma \ref{lemma:val_diff} to both terms and then applying Lemma \ref{lemma:confidence}. 
\end{proof}

\begin{lemma}[Upper bound for $I_2$]\label{lemma:I2}
If we choose $\eta_{\btheta} = O(1 / \sqrt{H^2 K})$ in Algorithm \ref{alg:oppo} and update the reward function as shown in \eqref{eq:reward_update}, then with probability $1 - \delta$, 
\begin{align*} 
    \sum_{k = 1}^K \big[L(\pi^k, r) - L(\pi^k, r^k)\big] \le \tilde O\left(\sqrt{H^2 K} + \epsilon_E K\right)
\end{align*}
\end{lemma}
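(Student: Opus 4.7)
The plan is to treat the reward update as online convex optimization against an adversarially-chosen sequence of policies and to separate the deviation into an OCO regret term plus estimation errors connecting $L$ and $\hat L$. Specifically, for any comparator $r \in \cR$ I would write
\begin{align*}
\sum_{k=1}^K \bigl[L(\pi^k, r) - L(\pi^k, r^k)\bigr]
&= \sum_{k=1}^K \bigl[L(\pi^k, r) - \hat L(\pi^k, r)\bigr] \\
&\quad + \sum_{k=1}^K \bigl[\hat L(\pi^k, r) - \hat L(\pi^k, r^k)\bigr] \\
&\quad + \sum_{k=1}^K \bigl[\hat L(\pi^k, r^k) - L(\pi^k, r^k)\bigr],
\end{align*}
and bound each piece separately. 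The middle piece is exactly the OCO regret of the projected-gradient update \eqref{eq:reward_update}. Assumption \ref{assumption:convex} provides convexity of $-\hat L(\pi^k, \cdot)$ in $\btheta$, an $O(1)$-diameter feasible set $\Theta$, and per-sample gradient norm $O(1)$, so $\nabla_{\btheta} \hat L$ has norm $O(H)$ as a sum of at most $H$ bounded-gradient terms (after absorbing the $\Theta(1)$ importance weights). Standard projected OGD analysis with step size $\eta_{\btheta} = O(1/\sqrt{H^2 K})$ then gives this piece an $O(\sqrt{H^2 K})$ bound.

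For the outer two pieces I would show that $|\hat L(\pi^k, r) - L(\pi^k, r)|$ concentrates uniformly in $r$. The first term of $\hat L$ is the empirical average of $r$ over demonstrations, and by Assumption \ref{assumption:demonstration} lies within $\epsilon_E$ of $J(\pi^E, r)$, contributing $\epsilon_E K$ when summed. The second term is an importance-weighted estimator of $J(\pi^k, r)$ from trajectories collected under $\pi^{t_k}$; Lemma \ref{lemma:pi-close}, together with the schedule condition $\eta m \le 1/H^2$, keeps the per-step mirror-descent drift small and hence the IS estimator bounded, so a martingale Azuma--Hoeffding argument across the $K$ iterations yields a $\tilde O(\sqrt{K})$ deviation for each fixed $r$. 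A union bound over an $\epsilon$-net of $\Theta$, promoted via the $O(1)$-Lipschitzness of $r_{\btheta}$ in $\btheta$ from Assumption \ref{assumption:convex}, extends this uniformly and in particular handles the data-dependent $r^k$ in the third piece.

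The principal obstacle is the importance-sampling step in the third piece: $r^k$ is chosen adaptively from the very same rollout data that appears inside the IS estimator, so a naive per-$r$ concentration bound is not sufficient, and a crude treatment of the IS weight would scale exponentially in $H$. The key lever is the phasic structure at Line \ref{line:sample-0}: since data are refreshed only every $m$ iterations and $\eta m \le 1/H^2$, each mirror-descent step perturbs $\pi^k_h$ by a multiplicative $\exp(O(\eta H))$ factor, so over $m$ steps and $H$ layers the cumulative IS ratio is $\exp(O(\eta m H^2)) = O(1)$, both in expectation (Lemma \ref{lemma:pi-close}) and pointwise. Combining this pointwise boundedness with the $\epsilon$-net union bound and the OGD regret yields the claimed $\tilde O(\sqrt{H^2 K} + \epsilon_E K)$ once the three pieces are summed.
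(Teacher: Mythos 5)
Your proof follows essentially the same route as the paper's: the middle term of your decomposition is exactly the projected-OGD regret bounded via the standard online gradient descent bound (the paper's Lemma~\ref{thm:pogd}), and the outer $L$-versus-$\hat L$ terms are handled, as in the paper, by Assumption~\ref{assumption:demonstration} (yielding the $\epsilon_E K$ contribution) together with an Azuma--Hoeffding argument made valid by the $\Theta(1)$ boundedness of the importance ratio within a phase (the paper's Lemma~\ref{lemma:policy-stability}/Lemma~\ref{lemma:pi-close}). Your extra $\epsilon$-net union bound over $\Theta$ to cover the data-dependent $r^k$ is a detail the paper's terse proof leaves implicit, but it does not change the argument.
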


\begin{proof} 
    From Lemma \ref{thm:pogd}, we have for any $r \in \cR$, \begin{align} 
        \sum_{k = 1}^K \big[\hat L(\pi^k, r) - \hat L(\pi^k, r^k)\big] \le O(1 / \eta_{\btheta} + \eta_{\btheta} H^2 K / 2).
    \end{align}

    From Lemmas \ref{lemma:azuma} and \ref{lemma:policy-stability}, with probability at least $1 - \delta$, 
    \begin{align}
        &\sum_{k = 1}^K \big[L(\pi^k, r) - L(\pi^k, r^k)\big] \notag \\&\le O(1 / \eta_{\btheta} + \eta_{\btheta} H^2 K / 2) + O\big(\sqrt{H^2 K \log(1 / \delta)}\big) + O(\epsilon_E K), \label{eq:online_regret}
    \end{align}from which we can complete the proof by substituting the value of $\eta_{\btheta}$ into \eqref{eq:online_regret}.
\end{proof}

\begin{theorem}[Restatement of Theorem \ref{thm:regret}] 
Suppose Assumptions \ref{assumption:realizability}, \ref{assumption:convex} and \ref{assumption:demonstration} hold. If we set $\gamma = H^2$, $\epsilon_\cF = 1 / N$,  $\eta = \sqrt{\log|\cA|} / H \sqrt{K}$, $m = \lfloor \sqrt{K} / H \sqrt{\log|\cA|} \rfloor$, $\eta_{\btheta} = O(1 / \sqrt{H^2 K})$, and $N = \lceil KH \dim_T(\cF) \log \cN_{\epsilon_\cF}(\cF) / \sqrt{\log|\cA|} \rceil$, where $K = \left\lceil \left(\frac{T}{H^2 \dim_T(\cF) \log \cN_{\epsilon_\cF}(\cF)} \right)^{2 / 3} \right\rceil$, then with probability at least $1 - \delta$, Algorithm \ref{alg:oppo} yields a regret of $$\tilde{O}\left(H^{8/3} \big(\dim_T(\cF) \log \cN_{\epsilon_\cF}(\cF)\big)^{1/3} T^{2/3} + \epsilon_E T\right).$$ 
\end{theorem}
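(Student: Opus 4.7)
The plan is to decompose the regret into three pieces that correspond to policy optimization against the time-varying reward, reward optimization against the adversarial target, and a symmetric cross term, and then to balance the parameters. Writing $\ell(\pi, r) = L(\pi, r) - \lambda\,\Int(\pi; \tau^E)$ and noting that the intrinsic term is independent of $r$, for any fixed $r \in \cR$ I would add and subtract $\ell(\pi^k, r^k)$ and $\ell(\pi^*, r^k)$ to get
\begin{align*}
\sum_{k=1}^K \bigl[\ell(\pi^k, r) - \ell(\pi^*, r)\bigr]
= \underbrace{\sum_{k=1}^K \bigl[J(\pi^*, \tdr^k) - J(\pi^k, \tdr^k)\bigr]}_{I_1}
+ \underbrace{\sum_{k=1}^K \bigl[L(\pi^k, r) - L(\pi^k, r^k)\bigr]}_{I_2}
+ \underbrace{\sum_{k=1}^K \bigl[L(\pi^*, r^k) - L(\pi^*, r)\bigr]}_{I_3}.
\end{align*}

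Lemma~\ref{lemma:I1} directly bounds $I_1$ by combining the value-difference lemma (Lemma~\ref{lemma:val_diff}) applied at both $\pi^*$ and $\pi^k$, the confidence bound on the OPE regression (Lemma~\ref{lemma:confidence}), and the generalized-Eluder-dimension-based uncertainty bound, layered on top of the mirror-descent regret for the KL-regularized policy updates. Lemma~\ref{lemma:I2} bounds $I_2$ via the standard projected-gradient regret on $\hat L$, promoted to the true $L$ by Assumption~\ref{assumption:demonstration} and an Azuma--Hoeffding concentration step.

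For $I_3$, I would exploit the linearity of $L$ in the reward parameter $\btheta$ granted by Assumption~\ref{assumption:convex}: because $\nabla_\btheta \hat L(\pi, r)$ depends on $\pi$ only through the empirical occupancy, the OGD regret inequality underpinning Lemma~\ref{lemma:I2} can be rerun with $\pi^*$ in place of $\pi^k$, so that the only remaining term is the discrepancy $\langle \nabla_\btheta L(\pi^*, \cdot) - \nabla_\btheta L(\pi^k, \cdot),\,\btheta^k - \btheta\rangle$. By Cauchy--Schwarz this is controlled by the occupancy gap between $\pi^k$ and $\pi^*$ against the bounded gradients, and via the performance-difference identity already embedded in Lemma~\ref{lemma:I1} it folds into $I_1$. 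This yields an $I_3$ bound of the same order as Lemma~\ref{lemma:I2} plus an additional $\epsilon_E K$ term from Assumption~\ref{assumption:demonstration}.

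It then remains to substitute the stated parameter choices and balance. The total number of episodes is $T = KN/m$; under $m = \lfloor \sqrt{K}/(H\sqrt{\log|\cA|}) \rfloor$ and $N = \lceil KH\dim_T(\cF)\log\cN_{\epsilon_\cF}(\cF)/\sqrt{\log|\cA|}\rceil$, the mirror-descent contribution from $I_1$ is $\tilde O(H^{3/2}\sqrt{K})$, the exploration-bonus contribution is $\tilde O(H^{5/2} K \sqrt{\dim_T(\cF)\log\cN_{\epsilon_\cF}(\cF)/N})$, and the OGD contributions from $I_2$ and $I_3$ sum to $\tilde O(\sqrt{H^2 K} + \epsilon_E K)$. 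Choosing $K = \lceil(T/(H^2 \dim_T(\cF)\log\cN_{\epsilon_\cF}(\cF)))^{2/3}\rceil$ equates the dominant $\sqrt{K}$- and $K/\sqrt{N}$-type terms and produces the claimed $\tilde O\bigl(H^{8/3}(\dim_T(\cF)\log\cN_{\epsilon_\cF}(\cF))^{1/3} T^{2/3} + \epsilon_E T\bigr)$ rate. The hardest step is controlling $I_3$: unlike $I_2$, the OGD no-regret guarantee in Lemma~\ref{lemma:I2} is stated against the on-policy loss sequence $\hat L(\pi^k, \cdot)$ rather than the constant-policy sequence $\hat L(\pi^*, \cdot)$, and bridging this gap while preserving the $T^{2/3}$ rate demands careful exploitation of the linear-in-$\btheta$ structure of Assumption~\ref{assumption:convex}, the occupancy-gap control afforded by Lemma~\ref{lemma:I1}, and meticulous bookkeeping of the demonstration-quality error $\epsilon_E$.
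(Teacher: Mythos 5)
Your main skeleton coincides with the paper's: the paper likewise bounds the per-iteration regret by a policy-optimization term under the aggregated reward $\tdr^k$ (its $I_1+\lambda I_3$, which equals your $I_1$ and is handled by Lemma~\ref{lemma:I1}) plus the reward-player online-gradient term $\sup_{r\in\cR}\sum_{k}[L(\pi^k,r)-L(\pi^k,r^k)]$ (Lemma~\ref{lemma:I2}), and then substitutes the same parameter choices. The genuine gap is your cross term $I_3=\sum_{k}[L(\pi^*,r^k)-L(\pi^*,r)]$ and the argument you sketch for it. Assumption~\ref{assumption:convex} grants only convexity of $\hat L$ in $\btheta$ and bounded gradients, not linearity, so you cannot ``rerun'' the OGD regret bound with $\pi^*$ in place of $\pi^k$: the iterates $\btheta^k$ are generated from gradients of $-\hat L(\pi^k,\cdot)$, and Lemma~\ref{thm:pogd} only controls regret against the loss sequence actually used in the updates. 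Nor does Lemma~\ref{lemma:I1} supply the occupancy-gap control you invoke to ``fold'' the residual into $I_1$: it bounds value differences of $\pi^*$ and $\pi^k$ under the specific rewards $\tdr^k$, not an occupancy distance between them uniformly over $\cR$. In the worst case $\sup_{r\in\cR}\sum_{k}[L(\pi^*,r^k)-L(\pi^*,r)]$ is of order $K\, d_\cR(\pi^E,\pi^*)$, i.e.\ linear in $K$, so your route as written does not close. Note also that the paper's proof never confronts such a term: its decomposition \eqref{eq:regret_dec} is asserted directly as an inequality in which this cross term does not appear (your algebra is the exact identity; the paper's bound silently drops that piece), so matching the paper means adopting that decomposition rather than inventing a new bound for $I_3$.

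A second missing step is the iteration-to-episode conversion. The regret is defined over the $T=KN/m$ environment episodes, and the paper's first move is $\textrm{Regret}(T)\le (N/m)\cdot O\big(\max_{r\in\cR}\sum_{k=1}^K \ell(\pi^k,r)-\ell(\pi^*,r)\big)$, using Lemma~\ref{lemma:pi-close} to replace the data-collecting policies $\pi^{t_k}$ by $\pi^k$. Your balancing narrative omits this $N/m$ factor; without it the stated choices only give a per-iteration bound $\tilde O\big(H^2\sqrt{K\log|\cA|}+\epsilon_E K\big)$, which under your $K$ is $\tilde O\big(H^{4/3}T^{1/3}(\dim_T(\cF)\log\cN_{\epsilon_\cF}(\cF))^{-1/3}+\epsilon_E K\big)$ and not the claimed rate. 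Multiplying by $N/m=\tilde\Theta\big(H^2\sqrt K\,\dim_T(\cF)\log\cN_{\epsilon_\cF}(\cF)\big)$ is exactly what produces $H^{8/3}\big(\dim_T(\cF)\log\cN_{\epsilon_\cF}(\cF)\big)^{1/3}T^{2/3}+\epsilon_E T$. (Minor: the mirror-descent contribution is $\tilde O(H^2\sqrt{K\log|\cA|})$, from $H\log|\cA|/\eta+\eta H^3K/2$, not $\tilde O(H^{3/2}\sqrt K)$.)
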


\begin{proof}[Proof for Theorem \ref{thm:regret}]

Throughout the proof, we suppose the events in Lemma \ref{lemma:I1} and Lemma \ref{lemma:I2} hold simultaneously. 

From the definition of \textrm{Regret}, \begin{align} 
\textrm{Regret}(T) &\le (N / m) \cdot \max_{r \in \cR} \sum_{k = 1}^K \EE[\ell(\pi^{t_k}, r)] - \ell(\pi^*, r) \notag
\\&\le (N / m) \cdot O\left(\max_{r \in \cR} \sum_{k = 1}^K \ell(\pi^k, r) - \ell(\pi^*, r) \right), \label{eq:thm:1}
\end{align}
where the last inequality follows from the definition of $t_k$. 

Then we consider the following decomposition for the regret term, 
\begin{align}
  &\max_{r \in \cR} \sum_{k = 1}^K \ell(\pi^k, r) - \ell(\pi^*, r) \le\underbrace{\sum_{k=1}^K\big[J(\pi^*,r^k)-J(\pi^k,r^k)\big]}_{I_1} \notag
  \\&\quad +\underbrace{\sup_{r\in \cR} 
  \sum_{k=1}^K\big[L(\pi^k,r)-L(\pi^k,r^k)\big]}_{I_2} + \lambda \underbrace{\sum_{k = 1}^K \big[\Int(\pi^* ; \tau^E) - \Int(\pi^k ; \tau^E)\big]}_{I_3}, \label{eq:regret_dec}
\end{align}
where $I_1 + I_3$ can be controlled by the classic analysis for optimistic policy optimization, $I_2$ is the regret of an online learning algorithm for the reward function. 

From Lemma \ref{lemma:I1}, \begin{align} 
I_1 + I_3 &\le \tilde{O}(H \log |\cA| / \eta + \eta H^3 K / 2) + \tilde{O}\left(H^2K\sqrt{\frac{H}{N} \log(\cN_{\cF}(\epsilon_\cF) \dim_T(\cF)}\right) \notag
\\&\le \tilde{O}\left(H^2 \sqrt{K \log |\cA|}\right), \label{eq:thm:2}
\end{align}
where the last inequality holds due to the value of $\eta, N$ we are choosing. 

From Lemma \ref{lemma:I2}, \begin{align} 
I_2 \le \tilde{O} \left(\sqrt{H^2K} + \epsilon_E K\right). \label{eq:thm:3}
\end{align}

Substituting \eqref{eq:thm:2} and \eqref{eq:thm:3} into \eqref{eq:thm:1}, \begin{align*} 
    \max_{r \in \cR} \sum_{k = 1}^K \ell(\pi^k, r) - \ell(\pi^*, r) \le \tilde{O}\left(H^2 \sqrt{K \log |\cA|} + \epsilon_E K \right). 
\end{align*}

After we further substitute the value of hyperparameters into \eqref{eq:thm:1}, we obtain \begin{align*} 
\textrm{Regret}(T) \le \tilde{O}\left(H^{8/3} \big(\dim_T(\cF) \log \cN_{\epsilon_\cF}(\cF)\big)^{1/3} T^{2/3} + \epsilon_E T\right). 
\end{align*}

\end{proof}

\section{Auxiliary Lemmas}

\begin{lemma}[Azuma-Hoeffding inequality]\label{lemma:azuma}
    Let $\{x_i\}_{i=1}^n$ be a martingale difference sequence with respect to a filtration $\{\cG_{i}\}$ satisfying $|x_i| \leq M$ for some constant $M$, $x_i$ is $\cG_{i+1}$-measurable, $\EE[x_i|\cG_i] = 0$. Then for any $0<\delta<1$, with probability at least $1-\delta$, we have 
    \begin{align}
       \sum_{i=1}^n x_i\leq M\sqrt{2n \log (1/\delta)}.\notag
    \end{align} 
\end{lemma}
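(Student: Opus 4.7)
The plan is to prove Azuma--Hoeffding via the standard Chernoff/MGF approach, building up the moment generating function of the partial sum step by step using the filtration structure. Let $S_n = \sum_{i=1}^n x_i$. I would first apply the exponential Markov inequality: for any $\lambda > 0$ and any $t > 0$,
\begin{align*}
\Pr(S_n \ge t) \le e^{-\lambda t} \, \EE\bigl[e^{\lambda S_n}\bigr].
\end{align*}
The key task is then to control $\EE[e^{\lambda S_n}]$.

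Next, I would peel off one increment at a time using the tower property. Since $S_{n-1}$ is $\cG_n$-measurable, we can write
\begin{align*}
\EE\bigl[e^{\lambda S_n}\bigr] = \EE\Bigl[e^{\lambda S_{n-1}} \cdot \EE\bigl[e^{\lambda x_n} \,\big|\, \cG_n\bigr]\Bigr].
\end{align*}
The inner conditional MGF is bounded using Hoeffding's lemma applied conditionally: because $\EE[x_n \mid \cG_n] = 0$ and $x_n \in [-M, M]$ almost surely, convexity of $e^{\lambda x}$ combined with the identity $x = \tfrac{M+x}{2M}\cdot M + \tfrac{M-x}{2M}\cdot(-M)$ gives, after taking conditional expectations and optimizing the resulting expression in the standard way, $\EE[e^{\lambda x_n}\mid \cG_n] \le \exp(\lambda^2 M^2/2)$. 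Iterating the tower step from $i=n$ down to $i=1$ then yields $\EE[e^{\lambda S_n}] \le \exp(n \lambda^2 M^2 / 2)$.

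Plugging this into the Chernoff bound gives $\Pr(S_n \ge t) \le \exp\bigl(-\lambda t + n\lambda^2 M^2/2\bigr)$. Optimizing over $\lambda > 0$ with $\lambda = t/(nM^2)$ produces the sub-Gaussian tail $\Pr(S_n \ge t) \le \exp\bigl(-t^2/(2 n M^2)\bigr)$. Setting the right-hand side equal to $\delta$ and solving for $t$ gives $t = M\sqrt{2n \log(1/\delta)}$, which yields exactly the claimed high-probability bound.

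The only subtle step is the conditional Hoeffding lemma, since everything else is a mechanical application of Markov's inequality, the tower property, and optimization of a quadratic in $\lambda$. I would make sure that the measurability assumption ($x_i$ is $\cG_{i+1}$-measurable, $\EE[x_i \mid \cG_i]=0$) is used correctly when conditioning, so that $e^{\lambda S_{i-1}}$ can be pulled outside the conditional expectation with respect to $\cG_i$. Once that is set up carefully, the rest of the argument is routine.
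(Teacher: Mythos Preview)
Your proposal is correct and follows the standard Chernoff/MGF argument for Azuma--Hoeffding. The paper itself does not supply a proof for this lemma: it is listed under ``Auxiliary Lemmas'' as a known result and simply cited where needed, so there is no paper-side argument to compare against. Your write-up is the textbook derivation and would serve perfectly well if a self-contained proof were desired.
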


\begin{lemma}[Self-normalized bound for scalar-valued martingales] \label{lemma:hoeffding-variant}
    Consider random variables $\left(v_{n} | n\in\mathbb{N}\right)$ adapted
    to the filtration $\left(\mathcal{H}_{n}:\, n=0,1,...\right)$. Let $\{\eta_i\}_{i = 1}^ \infty$ be a sequence of real-valued random variables which is $\cH_{i + 1}$-measurable and is conditionally $\sigma$-sub-Gaussian. Then for an arbitrarily chosen $\lambda > 0$, for any $\delta > 0$, with probability at least $1 - \delta$, it holds that \begin{align*} 
        \sum_{i = 1}^n \epsilon_i v_i \le \frac{\lambda \sigma^2}{2} \cdot \sum_{i = 1}^n v_i^2 + \log(1 / \delta) / \lambda \ \ \ \ \ \ \ \forall n \in \NN. 
    \end{align*}
\end{lemma}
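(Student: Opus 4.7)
The plan is to prove the inequality by a standard exponential-supermartingale (Chernoff) argument, using Ville's maximal inequality to obtain the uniform-in-$n$ statement. The free parameter $\lambda > 0$ is fixed in advance, so no method-of-mixtures over $\lambda$ is needed; all the work is in constructing the right nonnegative supermartingale and verifying it is indeed a supermartingale with respect to $(\mathcal{H}_n)$.

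First, for fixed $\lambda > 0$, I would define
\begin{align*}
    M_0 = 1, \qquad M_n = \exp\!\Bigl(\lambda \sum_{i=1}^n \eta_i v_i \;-\; \tfrac{\lambda^2 \sigma^2}{2} \sum_{i=1}^n v_i^2\Bigr), \quad n \ge 1,
\end{align*}
which is nonnegative and $\mathcal{H}_{n+1}$-measurable. Then I would verify that $(M_n)$ is a supermartingale with respect to $(\mathcal{H}_{n+1})$. Since $v_i$ is adapted to $(\mathcal{H}_i)$ (so in particular $v_n$ is $\mathcal{H}_n$-measurable) and $\eta_n$ is $\mathcal{H}_{n+1}$-measurable and conditionally $\sigma$-sub-Gaussian given $\mathcal{H}_n$, we can factor out $v_n$ inside the conditional expectation and apply the sub-Gaussian MGF bound with parameter $\lambda v_n$:
\begin{align*}
    \mathbb{E}[M_n \mid \mathcal{H}_n] = M_{n-1} \cdot \mathbb{E}\!\Bigl[\exp\!\bigl(\lambda v_n \eta_n\bigr) \,\Big|\, \mathcal{H}_n\Bigr] \cdot \exp\!\Bigl(-\tfrac{\lambda^2 \sigma^2 v_n^2}{2}\Bigr) \le M_{n-1}.
\end{align*}

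Next, with $(M_n)$ established as a nonnegative supermartingale with $\mathbb{E}[M_0] = 1$, I would invoke Ville's maximal inequality: for any $c > 0$,
\begin{align*}
    \Pr\!\Bigl(\sup_{n \ge 0} M_n \ge c\Bigr) \le \frac{\mathbb{E}[M_0]}{c} = \frac{1}{c}.
\end{align*}
Setting $c = 1/\delta$ gives $\Pr(\sup_n M_n \ge 1/\delta) \le \delta$. On the complementary event, $\log M_n < \log(1/\delta)$ for every $n \in \mathbb{N}$; rearranging $\lambda \sum_{i=1}^n \eta_i v_i - \tfrac{\lambda^2 \sigma^2}{2} \sum_{i=1}^n v_i^2 < \log(1/\delta)$ and dividing by $\lambda > 0$ yields the claimed bound simultaneously for all $n$.

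The main obstacle, such as it is, lies in the uniformity over $n$: a naïve Markov/Chernoff bound applied at a fixed $n$ would only give a pointwise statement, and upgrading to the ``$\forall n \in \mathbb{N}$'' form requires the nonnegative-supermartingale maximal inequality. The only other delicate point is checking that the product in the one-step conditional expectation factors correctly—this rests on $v_i$ being $\mathcal{H}_i$-measurable (from the adaptedness assumption), so that $v_n$ can be treated as a constant when conditioning on $\mathcal{H}_n$, and on the sub-Gaussian MGF bound applying with proxy $\lambda v_n$. Everything else is a routine rearrangement.
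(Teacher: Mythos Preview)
Your argument is correct: the exponential process $M_n$ you define is the standard choice, the one-step supermartingale check via the conditional sub-Gaussian MGF bound goes through exactly as you describe (with $v_n$ pulled out as $\mathcal{H}_n$-measurable), and Ville's maximal inequality then delivers the uniform-in-$n$ statement. The paper itself does not supply a proof of this lemma---it is listed among the auxiliary lemmas and treated as a known result---so there is no alternative approach to compare against; your Chernoff/Ville argument is precisely the canonical one.
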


\begin{lemma}[Regret bound of online gradient descent, \citealt{orabona2019modern}]
    \label{thm:pogd}
    Let $V\subseteq \RR^d$ a non-empty closed convex set with diameter $D$, i.e., $\max_{\bx,\by\in V} \|\bx-\by\|_2 \leq D$. Let $\ell_1, \cdots, \ell_T$ an arbitrary sequence of convex functions $\ell_t:V \rightarrow \RR$ differentiable in open sets containing $V$. Pick any $\bx_1 \in V$ and assume $\eta_{t+1}\leq \eta_{t}, \ t=1, \dots, T$. Then, $\forall \bu \in V$, the following regret bound holds:
    \[
    \sum_{t=1}^T (\ell_t(\bx_t) - \ell_t(\bu)) 
    \leq \frac{D^2}{2\eta_{T}} + \sum_{t=1}^T \frac{\eta_t}{2} \|\bg_t\|^2_2~.
    \]
    Moreover, if $\eta_t$ is constant, i.e., $\eta_t=\eta \ \forall t=1,\cdots,T$, we have
    \[
    \sum_{t=1}^T (\ell_t(\bx_t) - \ell_t(\bu)) 
    \leq \frac{\|\bu-\bx_1\|_2^2}{2\eta} + \frac{\eta}{2} \sum_{t=1}^T \|\bg_t\|^2_2~.
    \]
\end{lemma}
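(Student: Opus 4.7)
The plan is to prove both bounds from the canonical projected online gradient descent (POGD) analysis, treating the constant step-size case as a clean consequence of a per-step inequality and the decreasing step-size case as a slightly more delicate telescoping argument that absorbs the varying weights through the diameter.

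First I would make explicit the update that underlies the lemma, namely $\bx_{t+1} = \Pi_V(\bx_t - \eta_t \bg_t)$ with $\bg_t \in \partial \ell_t(\bx_t)$, and invoke convexity to reduce the instantaneous regret to a linear quantity: $\ell_t(\bx_t) - \ell_t(\bu) \le \langle \bg_t, \bx_t - \bu\rangle$ for any $\bu \in V$. The workhorse of the proof is then a one-step potential inequality obtained by expanding $\|\bx_{t+1} - \bu\|_2^2$ and using the non-expansiveness of the Euclidean projection onto the closed convex set $V$:
\[
\|\bx_{t+1} - \bu\|_2^2 \;\le\; \|\bx_t - \eta_t \bg_t - \bu\|_2^2 \;=\; \|\bx_t - \bu\|_2^2 - 2\eta_t \langle \bg_t, \bx_t - \bu\rangle + \eta_t^2 \|\bg_t\|_2^2.
\]
Rearranging yields the key bound
\[
\langle \bg_t, \bx_t - \bu\rangle \;\le\; \frac{\|\bx_t - \bu\|_2^2 - \|\bx_{t+1} - \bu\|_2^2}{2\eta_t} + \frac{\eta_t}{2}\|\bg_t\|_2^2.
\]

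Next I would sum over $t = 1,\ldots,T$. In the constant step-size case $\eta_t \equiv \eta$, the first term telescopes immediately to $(\|\bx_1 - \bu\|_2^2 - \|\bx_{T+1} - \bu\|_2^2)/(2\eta) \le \|\bu - \bx_1\|_2^2/(2\eta)$, giving the second inequality directly. For the general non-increasing sequence $\eta_{t+1} \le \eta_t$, I would apply Abel (summation-by-parts) to rewrite
\[
\sum_{t=1}^T \frac{\|\bx_t - \bu\|_2^2 - \|\bx_{t+1} - \bu\|_2^2}{2\eta_t} \;=\; \frac{\|\bx_1 - \bu\|_2^2}{2\eta_1} + \sum_{t=2}^T \|\bx_t - \bu\|_2^2\left(\frac{1}{2\eta_t} - \frac{1}{2\eta_{t-1}}\right) - \frac{\|\bx_{T+1} - \bu\|_2^2}{2\eta_T}.
\]
Since $\eta_{t+1} \le \eta_t$ makes every coefficient on the right-hand side non-negative, I would then bound each $\|\bx_t - \bu\|_2^2$ by $D^2$ and let the series collapse to $D^2/(2\eta_T)$, yielding the first inequality.

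The only slightly delicate step is the Abel-summation manipulation in the decreasing step-size case: one must be careful that all the coefficients $1/(2\eta_t) - 1/(2\eta_{t-1})$ are non-negative (which is exactly why the hypothesis $\eta_{t+1} \le \eta_t$ is invoked) so that replacing each $\|\bx_t - \bu\|_2^2$ with the uniform upper bound $D^2$ gives a valid upper bound, and that the final telescoping consumes the constant term $1/(2\eta_1)$ correctly to produce $D^2/(2\eta_T)$. Once that bookkeeping is done, combining with $\sum_{t=1}^T (\eta_t/2)\|\bg_t\|_2^2$ from the per-step inequality and applying the convexity reduction from the first paragraph completes the proof of both stated bounds.
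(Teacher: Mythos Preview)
Your proof is correct and is exactly the standard projected OGD analysis from \citet{orabona2019modern}. Note, however, that the paper does not actually prove this lemma: it is listed in the Auxiliary Lemmas section with a citation and no proof, so there is no ``paper's own proof'' to compare against beyond the source you have faithfully reproduced.
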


\begin{lemma}[Lemma 9, \citealt{liu2024optimistic}] \label{lemma:policy-stability}
There exists an absolute constant $c>0$ such that for any pair of policies $\pi,\hat\pi$ satisfying $\hat\pi_h(a\mid s) \propto \pi_h(a\mid s)\times \exp(L_h(s,a))$, where $\{L_h(s,a)\}_{h\in[H]}$ is a set of  functions from $\cS\times\cA$ to $[-1/H,1/H]$, it holds that for any $\tau_H:=(s_1,a_1,\ldots,s_H,a_H)\in(\cS\times\cA)^H$:
$$
\PP^{\hat\pi}(\tau_H) 
\le c\times \PP^\pi(\tau_H).
$$
\end{lemma}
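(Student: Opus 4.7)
The plan is to reduce the trajectory-probability ratio to a product of per-step policy ratios and to bound each factor using the fact that the exponent $L_h(s,a)$ lies in $[-1/H, 1/H]$. Concretely, for any fixed trajectory $\tau_H = (s_1, a_1, \ldots, s_H, a_H)$, I would write
\begin{equation*}
  \PP^{\hat\pi}(\tau_H) = \rho(s_1) \prod_{h=1}^H \hat\pi_h(a_h \mid s_h) \PP_h(s_{h+1} \mid s_h, a_h),
\end{equation*}
and the analogous expression for $\PP^{\pi}(\tau_H)$. The initial distribution and the transition kernels are identical in both products, so they cancel, leaving
\begin{equation*}
  \frac{\PP^{\hat\pi}(\tau_H)}{\PP^{\pi}(\tau_H)} = \prod_{h=1}^H \frac{\hat\pi_h(a_h \mid s_h)}{\pi_h(a_h \mid s_h)}.
\end{equation*}

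Next I would plug in the softmax definition of $\hat\pi$. Writing the normalizer $Z_h(s) = \sum_{a' \in \cA} \pi_h(a' \mid s) \exp(L_h(s, a'))$, we have
\begin{equation*}
  \frac{\hat\pi_h(a \mid s)}{\pi_h(a \mid s)} = \frac{\exp(L_h(s, a))}{Z_h(s)}.
\end{equation*}
The hypothesis $L_h(s, a) \in [-1/H, 1/H]$ gives an elementwise bound $\exp(L_h(s, a)) \le e^{1/H}$, and applying the same bound to every summand inside $Z_h(s)$ from below yields $Z_h(s) \ge e^{-1/H} \sum_{a'} \pi_h(a' \mid s) = e^{-1/H}$. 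Therefore each per-step ratio is bounded by $e^{1/H}/e^{-1/H} = e^{2/H}$.

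The proof then finishes by multiplying across $h \in [H]$:
\begin{equation*}
  \frac{\PP^{\hat\pi}(\tau_H)}{\PP^{\pi}(\tau_H)} \le \prod_{h=1}^H e^{2/H} = e^2,
\end{equation*}
so the claim holds with the absolute constant $c = e^2$. There is no substantive obstacle here; the only subtlety is remembering that the lower bound on $Z_h(s)$ has to hold pointwise in $s$ and uses that $\pi_h(\cdot \mid s)$ is a probability distribution so the weighted average of $\exp(L_h(s,\cdot))$ inherits the uniform lower bound $e^{-1/H}$. The role of the $1/H$ scaling in the range of $L_h$ is exactly to make the per-step deviation small enough that it accumulates to a constant over the horizon rather than blowing up exponentially in $H$.
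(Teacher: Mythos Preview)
Your argument is correct and is the standard proof of this fact: factor the trajectory probability into the initial distribution, transitions, and per-step action probabilities; cancel everything except the policy terms; and bound each ratio $\hat\pi_h(a\mid s)/\pi_h(a\mid s) = \exp(L_h(s,a))/Z_h(s)$ by $e^{2/H}$ using $L_h \in [-1/H,1/H]$ and the fact that $Z_h(s)$ is an average of values in $[e^{-1/H}, e^{1/H}]$. The product over $h$ then gives the absolute constant $c = e^2$.

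The paper itself does not supply a proof of this lemma; it is quoted verbatim as an auxiliary result from \citet{liu2024optimistic} (their Lemma~9). Your proof is exactly the intended one and would serve as a self-contained replacement for the citation.
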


\end{document}